\newcommand{\rstyle}[1]{{\textcolor{black}{#1}}}
\newtheorem{theorem}{Theorem}
\newtheorem{lemma}{Lemma}
\newtheorem{open problem}{Open Problem}
\begin{document}
\bstctlcite{IEEEexample:BSTcontrol}
\title{Online Metric Learning for Multi-Label Classification* \footnote{*The original version of this paper is published in AAAI 2020.}}

\author{Xiuwen Gong,
		Jiahui Yang,
        Dong Yuan,
        Wei Bao
\thanks{X. Gong, D. Yuan, W. Bao are with the Faculty of Engineering,
The University of Sydney. J. Yang is with the School of Computer Science and Engineering, The University of New South Wales, Australia (E-mails: xiuwen.gong@sydney.edu.au; jiahuiyang0@gmail.com; dong.yuan@sydney.edu.au; wei.bao@sydney.edu.au)}
}

% The paper headers
%\markboth{IEEE TRANSACTIONS ON NEURAL NETWORKS AND LEARNING SYSTEMS}%
%{Shell \MakeLowercase{\textit{et al.}}: Bare Demo of IEEEtran.cls for IEEE Journals}

% make the title area
\maketitle

% As a general rule, do not put math, special symbols or citations
% in the abstract or keywords.
\begin{abstract}
Existing research into online multi-label classification, such as online sequential multi-label extreme learning machine (OSML-ELM) and stochastic gradient descent (SGD), has achieved promising performance. However, these works do not take label dependencies into consideration and lack a theoretical analysis of loss functions. Accordingly, we propose a novel online metric learning paradigm for multi-label classification to fill the current research gap. Generally, we first propose a new metric for multi-label classification which is based on $k$-Nearest Neighbour ($k$NN) and combined with large margin principle. Then, we adapt it to the online settting to derive our model which deals with massive volume ofstreaming data at a higher speed online. Specifically, in order to learn the new $k$NN-based metric, we first project instances in the training dataset into the label space, which make it possible for the comparisons of instances and labels in the same dimension. After that, we project both of them into a new lower dimension space simultaneously, which enables us to extract the structure of dependencies between instances and labels. Finally, we leverage the large margin and $k$NN principle to learn the metric with an efficient optimization algorithm. Moreover, we provide theoretical analysis on the upper bound of the cumulative loss for our method. Comprehensive experiments on a number of benchmark multi-label datasets validate our theoretical approach and illustrate that our proposed online metric learning (OML) algorithm outperforms state-of-the-art methods.
\end{abstract}

% Note that keywords are not normally used for peerreview papers.
\begin{IEEEkeywords}
Online Classification, Multi-label, Metric Learning, $k$-Nearest Neighbour ($k$NN).
\end{IEEEkeywords}

\IEEEpeerreviewmaketitle

\section{Introduction}

Real-world applications often involve in generating massive volume of streaming data at an unprecedented high speed. Many researchers have focused on data classification to help customers or users get better searching results, among which `online multi-label classification' which means each instance can be assigned multiple labels is very useful in some applications. For example, in the web-related applications, Twitter, Facebook and Instagram posts and RSS feeds are attached with multiple essential forms of categorization tags \cite{DBLP:journals/jmlr/ZhangGH10} . In the search industry, revenue comes from clicks on ads embedded in the result pages. Ad selection and placement can be significantly improved if ads are tagged correctly. There are many other applications, such as object detection in video surveillance \cite{DBLP:conf/www/PopoviciWG14} and image retrieval in dynamic databases \cite{DBLP:conf/icmcs/DongB03}.

In the development of multi-label classification \cite{DBLP:journals/ml/TsoumakasZZ12,DBLP:journals/csur/GibajaV15}, one challenge that remains unsolved is that most multi-label classification algorithms are developed in an off-line mode \cite{DBLP:journals/ml/ChengH09,DBLP:conf/nips/ChenL12,DBLP:conf/wsdm/BabbarS17,DBLP:journals/jmlr/LiuT17,DBLP:journals/jmlr/ZhouTPT19,DBLP:journals/pami/LiuXTZ19}. These methods assume that all data are available in advance for learning. However,there are two major limitations of developing multi-label methods under such an assumption: firstly, these methods are impractical for large-scale datasets, since they require all datasets to be stored in memory; secondly, it is non-trivial to adapt off-line multi-label methods to the sequential data. In practice, data is collected sequentially, and data that is collected earlier in this process may expire as time passes.
%Current multi-label classification methods usually lose their efficiency when handling new data examples.
Therefore, it is important to develop new multi-label classification methods to deal with streaming data.

Several online multi-label classification studies have recently been developed to overcome the above-mentioned limitations. For example, online learning with accelerated nonsmooth stochastic gradient (OLANSGD) \cite{DBLP:conf/icassp/ParkC13} was proposed to solve the online multi-label classification problem. Moreover, the online sequential multi-label extreme learning machine (OSML-ELM) \cite{DBLP:journals/evs/VenkatesanEDPW17} is a single-hidden layer feed-forward neural network-based learning technique. OSML-ELM classifies the examples by their output weight and activation function. Unfortunately, all of these online multi-label classification methods lack an analysis of loss function and disregard label dependencies. Many studies \cite{DBLP:conf/icml/DembczynskiCH10,DBLP:journals/ml/ReadPHF11,DBLP:conf/nips/BhatiaJKVJ15,DBLP:conf/icml/YenHRZD16,DBLP:journals/jmlr/LiuTM17} have shown that multi-label learning methods that do not capture label dependency usually achieve degraded prediction performance. This paper aims to fill these gaps.

%Several studies have recently been developed in online multi-label classification. These works mainly use different function to project instances into a subset of labels. For example, online sequential multi-label extreme learning machines (OSML-ELM) \cite{DBLP:journals/evs/VenkatesanEDPW17} is maintains the universal approximation capability of single hidden-layer feedforward neural network with a random input weight initialization. Online learning with accelerated nonsmooth stochastic gradient descent(OLANSGD)\cite{DBLP:conf/icassp/ParkC13} is converge to optimal solution within some error range under certain constraints on the objective function and the step size by using a accelerated nonsmooth stochastic gradient descent(ANSGD). However, all these algorithms disregard the label dependency which may reduce prediction accuracy in real world application. Dembczynski \textsl{et al.}\cite{DBLP:conf/icml/DembczynskiCH10} have shown that methods of multi-label learning which explicitly capture label dependency will usually achieve better prediction performance.
%These triggers us to develop an a novel embedding method on metric learning algorithm for online multi-label classification problem.
$k$-Nearest Neighbour ($k$NN) algorithms have achieved superior performance in various applications \cite{DBLP:conf/eccv/DengBLF10}. Moreover, experiments show that distance metric learning on single-label prediction can improve the prediction performance of $k$NN. Nevertheless, there are two problems associated with applying a $k$NN algorithm to an online multi-label setting. Firstly, naive $k$NN algorithms do not consider label dependencies. Secondly, it is non-trivial to learn an appropriate metric for online multi-label classification.
%$k$ nearest neighbor ($k$NN) search is wildly used in real application and it is a oldest method for pattern classification, and it has been proven that $k$NN algorithms achieves superior performance when handling many class problems \cite{DBLP:conf/eccv/DengBLF10}. Furthermore, experiments show that distance metric learning on single-label prediction can improve prediction performance of $k$NN. Nevertheless, $k$NN approach is sensitive to metric distance, therefore, the $k$NN algorithm is suitable for application for which sufficient domain knowledge is available. When handling multi-label problems, $k$NN learns each label separately. Many works are trying to develop  approaches on multi-label classification, one of a typical paradigm is encoding-decoding, for example, principal label space transformation (PLST) \cite{DBLP:journals/neco/TaiL12} uses principal component analysis (PCA) to project output labels into lower dimension. Canonical correlation analysis (CCA) \cite{conf/icml/ZhangS12} and maximum margin output coding (MMOC) are trying to encode the original label space. Nonetheless, all these approaches are not designed for streaming data. Consequently, it is necessary to develop a novel approach to handle the massive amount of data generated at an unprecedented rate with consideration of the label dependency.

To break the bottleneck of $k$NN, we here propose a novel multi-label learning paradigm for multi-label classification. More specifically, we project instances and labels into the same embedding space for comparison, after which we learn the distance metric by enforcing the constraint that the distance between embedded instance and its correct label must be smaller than the distance between the embedded instance and other labels. Thus, two nearby instances from different labels will be pushed further. Moreover, an efficient optimization algorithm is proposed for the online multi-label scenario. In theoretical terms, we analyze the upper bound of cumulative loss for our proposed model. A wide range of experiments on benchmark datasets corroborate our theoretical results and verify the improved accuracy of our method relative to state-of-the-art approaches.

The remainder of this paper is organized as follows. We first describe the related work, the online metric learning for multi-label classification and the optimization algorithm. Next, we introduce the upper bound of the loss function. Finally, we present the experimental results and conclude this paper.
%The rest of the paper is organized as follow. Section \ref{Related_Work} presents related work and Section \ref{OML} presents the online metric learning for multi-label classification and the optimization algorithm. Section \ref{loss_bound} introduces the upper bound of the loss function. Experiments are provided in Section \ref{Experiments} and conclusion is presented in Section \ref{Conclusion}.

\section{Related Work}
\label{Related_Work}
Existing multi-label classification methods can be grouped into two major categories: namely, \textit{algorithm adaptation} (AA) and \textit{problem transformation} (PT). AA extends specific learning algorithms to deal with multi-label classification problems. Typical AA methods include \cite{Zhang:2006:MNN:1159162.1159294,conf/ijcai/BrinkerH07,DBLP:journals/ml/ZhouTHM19}. Moreover, PT methods such as that developed by \cite{conf/nips/HsuKLZ09}, transform the learning task into one or more single-label classification problems. However, all of these methods assume that all data are available for learning in advance. These methods thus incur prohibitive computational costs on large-scale datasets, and it is also non-trivial to apply them to sequential data.

The state-of-the-art approaches to online multi-label classification have been developed to handle sequential data. These approaches can be divided into two key categories: \textsl{Neural Network} and \textsl{Label Ranking}. Neural Network approaches are based on a collection of connected units or nodes, referred to as artificial neurons. Each connection between artificial neurons can transmit the signal from one neuron to another. The artificial neuron that receives the signal can process it and then transmit signal to other artificial neurons. Moreover, label ranking, another popular approach to multi-label learning, involves a set of ranking functions being learned to order all the labels such that relevant labels are ranked higher than irrelevant ones.

From the neural network perspective, \citeauthor{DBLP:journals/air/DingZZXN15} \cite{DBLP:journals/air/DingZZXN15} developed a single-hidden layer feedforward neural network-based learning technique named ELM. In this method, the initial weights and the hidden layer bias are selected at random, and the network is trained for the output weights to perform the classification. Moreover, \citeauthor{DBLP:journals/evs/VenkatesanEDPW17} \cite{DBLP:journals/evs/VenkatesanEDPW17} developed the OSML-ELM approach, which uses ELM to handle streaming data. OSML-ELM uses a sigmoid activation function and outputs weights to predict the labels. In each step, the output weight is learned from the specific equation. OSML-ELM converts the label set from bipolar to unipolar representation in order to solve multi-label classification problems.
%OSML-ELM regards the label dependency, which might be weak performed to handle small scaled data and usually achieve degraded prediction performance.

Some other existing approaches are based on label ranking, such as OLANSGD \cite{DBLP:conf/icassp/ParkC13}. In the majority of cases, ranking functions are learned by minimizing the ranking loss in the max margin framework. However, the memory and computational costs of this process are expensive on large-scale datasets. Stochastic gradient decent (SGD) approaches update the model parameters using only the gradient information calculated from a single label at each iteration. OLANSGD minimizes the primal form using Nesterov's smoothing, which has recently been extended to the stochastic setting.

However, none of these methods analyze the loss function, and all of them fail to capture the interdependencies among labels; these issues have been proved to result in degraded prediction performance. Accordingly, this paper aims to address these issues.

\begin{table}
	\small
	\centering
	\begin{tabular}{|p{0.22\columnwidth}|p{0.68\columnwidth}|}
		\hline
		\textbf{Notation}   & \textbf{Definition}                   \\ \hline
		$t$    &  the round of algorithm  \\ \hline
		
		$x_t$  &  an instance presented on round t \\ \hline
		
		$y_t$  &  corresponding label vector to $x_t$      \\ \hline
		
		$x$ &  nearest neighbour instance to $x_t$  \\ \hline
		
		$y$ & corresponding output of $x$  \\ \hline
		
		$X$ & initialized input matrix  \\ \hline
		
		$Y$ & corresponding output matrix  \\ \hline
		
		$n$ & the number of instances   \\ \hline
		
		$p$ & the number of features   \\ \hline
		
		$q$ & the number of labels   \\ \hline
		
		$d$ & the dimension of the new projection space   \\ \hline
		
		$V_t , P_t$ & projection matrix on round $t$ \\ \hline
		
		$m , M$ & lower bound and upper bound of $\lambda_t$ \\ \hline
		
		$\langle A, B \rangle_F$ & Frobenius inner product of $A$ and $B$ \\ \hline
		
		$||\cdot||_1$ & $l_1$ norm \\ \hline
		
		$||\cdot||_2$ & $l_2$ norm \\ \hline
		
		$||\cdot||_F$ & Frobenius norm \\ \hline

	\end{tabular}
	\vspace{-2mm}
	\caption{Summary of Notations}
	\label{tb:notations1}
	%\vspace{-1mm}
\end{table}

\section{Our Proposed Method}
\label{OML}
\subsection{Notations}
We denote the instance presented to the algorithm on round $t$ by $x_t \in \mathbb{R}^{p \times 1}$, and the label by $y_t \in \{0, 1\}^{q \times 1}$, and refer each instance-label pair as an example. Suppose that we initially have $n$ examples in memory, denoted by $D = \{(x_i, y_i)\}^n_{i=1}$.  $(x, y) \in D$ is a nearest neighbour to $x_t$. The initialized instance matrix is denoted as $X \in \mathbb{R}^{n\times p} $ and the correspond output matrix is denoted as $Y \in \{0,1\}^{n\times q} $. $t$ is a positive integer. $||\cdot||_F$ is Frobenius norm. $V_t = (v_1,v_2,...,v_d) \in \mathbb{R}^{q \times d} (d < q)$ is projection matrix which maps each output vector $y_t$ ($q$ dimension) to $V^T y_t$ ($d$ dimension). Let $P \in \mathbb{R}^{p \times q}$ also be the projection matrix. Each input vector $x_t$ ($p$ dimension) is projected to $V^TP^Tx_t$ ($d$ dimension). Then $x_t$ and $y_t$ can be compared in the projection space($d$ dimension). Notations are summarized in Table \ref{tb:notations1}.

%A simple linear regression model for BR is to learn the matrix $P$ through the following formulation:
%\begin{equation}\label{1}
%argmin_{P\in R^{p\times q}}  \frac{1}{2}||P^{T}X^{T} - Y^{T}||^2_F
%\end{equation}
%where $||\cdot ||_F$ is the Frobenius norm.

\subsection{Online Metric Learning}
Inspired by \citeauthor{DBLP:conf/nips/HsuKLZ09} \cite{DBLP:conf/nips/HsuKLZ09},who stated that each label vector can be projected into a lower dimensional label space, which is deemed as encoding, we propose the following large-margin metric learning approach with nearest neighbor constraints to learn projection. If the encoding scheme works well, the distance between the codeword of $x_t$, $(V^TP^Tx_t)$, and $y_t$, $(V^Ty_t)$, should tend to be 0 and less than the distance between codeword $x_t$ and any other output $V^Ty$. The following large margin formulation is then presented to learn the projection matrix $V$:
\begin{equation}\label{2}
\begin{split}
&argmin_{V\in \mathbb{R}^{q\times d}}  \frac{1}{2}||V||^2_F + \xi_t \\
&s.t.\quad ||V^{T}P^{T}x_t-V^Ty_t||^2_2 + \Delta(y_t,y) - \xi_t\\
&\quad \leq ||V^{T}P^{T}x_t-V^Ty||^2_2, \forall t \in \{1, 2, \cdots\}
\end{split}
\end{equation}
The constraints in Eq.(\ref{2}) guarantee that the distance between the codeword of $x_t$ and the codeword of $y_t$ is less than the distance between the codeword of $x_t$ and codeword of any other output. To give Eq.(\ref{2}) more robustness, we add loss function $\Delta(y_t, y)$ as the margin. The loss function is defined as $\Delta(y_t, y) = ||y_t - y||_1$, where $||\cdot||_1$ is the $l_1$ norm. After that, we use Euclidean metric to measure the distances between instances $x_t$ and $x$ and then learn a new distance metric, which improves the performance of $k$NN and also captures label dependency.

To retain the information learned on the round $t$, we apply above large margin formulation into online setting. Thus, we have to define the initialization of the projection matrix and the updating rule. We initialize the projection matrix $V_1$ to a non-zero matrix and set the new projection matrix $V_{t+1}$ to be the solution of the following constrained optimization problem on round $t$.
\begin{equation}\label{3}
\begin{split}
&V^T_{t+1} = argmin_{V\in \mathbb{R}^{q\times d}} \frac{1}{2} ||V^T - V^T_t||^2_F\\
&\quad s.t. \quad l(V;(x_t,y_t)) = 0
\end{split}
\end{equation}
The loss function is defined as following:
\begin{equation}\label{4}
\begin{split}
l(V;(x_t, y_t)) = &\max\{0, \Delta(y_t, y)\!\!-\!\! (||V^TP^Tx_t \!- \!\!\! V^Ty||^2_2 \\
&- ||V^TP^Tx_t - V^Ty_t||^2_2)\}
\end{split}
\end{equation}
where the matrix $P$ is learned through the following formulation:
\begin{equation*}
argmin_{P\in \mathbb{R}^{p\times q}}  \frac{1}{2}||P^{T}X^{T} - Y^{T}||^2_F
\end{equation*}
Define the loss function on round $t$ as
\begin{equation}\label{8}
\begin{split}
l_t(V_t;(x_t, y_t)) = &\max\{0, \Delta(y_t, y)\!\!-\!\! (||V^T_tP^Tx_t \!- \!\!\! V^T_ty||^2_2 \\
&- ||V^T_tP^Tx_t - V^T_ty_t||^2_2)\}
\end{split}
\end{equation}

When loss function is zero on round $t$, $V_{t+1} = V_t$. In contrast, on those rounds where the loss function is positive, the algorithm enforces $V_{t+1}$ to satisfy the constraint $l_{t+1}(V_{t+1};(x_{t+1},y_{t+1})) = 0$ regardless of the step-size required. This update rule requires $V_{t+1}$ to correctly classify the current example with a sufficient high margin and $V_{t+1}$ have to stay as closed as $V_t$ to retain the information learned on the previous round.
%where $l_t(V;(x_t,y_t)) = \Delta(y_t, y)\!\!-\!\! (||V^T_tP^Tx_t \!- \!\!\! V^T_ty||^2_2 - ||V^T_tP^Tx_t - V^T_ty_t||^2_2)$. The motivation for this update requires $V_{t+1}$ to correctly classify the current example with a sufficiently high margin and $V_{t+1}$ must stay as close as possible to $V_t$. If $V_t$ works well, the distance between $x_t(V^TP^tx_t)$ and $y_t(V^Ty_t)$ should be tend to 0 and less than it to its nearest neighbor output $V^Ty$.

\subsection{Optimization}
The optimization of Eq.(\ref{3}) can be shown by using standard tools from convex optimization \cite{Boyd:2004:CO:993483}. If $l_t = 0$ then $V_t$ itself satisfies the constraint in Eq.(\ref{3}) and is clearly the optimal solution. Therefore, we concentrate on the case where $l_t > 0$. Firstly, we define the Lagrangian of the optimization problem in Eq.(\ref{3}) to be,
\begin{equation}\label{5}
\begin{split}
L = &\frac{1}{2}||V^T-V^T_t||^2_F + \lambda(\Delta(y_t, y) \\
&-\!(||V^TP^Tx_t - V^Ty||^2_2 - ||V^TP^Tx_t - V^Ty_t||^2_2))
\end{split}
\end{equation}
where the $\lambda$ is a Lagrange multiplier.
%the optimization problem in Eq.(\ref{3}) has a convex objective function and a single feasible affine constraint. These are sufficient conditions for Slater's condition, therefore, finding the problem's optimum is equivalent to satisfying the Karush-Khun-Tucker conditions \cite{Boyd:2004:CO:993483}.

Setting the partial derivatives of $L$ with respect to the elements of $V^T$ to zero gives
\begin{equation*}\label{new1}
\begin{split}
0 = \frac{\partial L}{\partial V^T} = &V^T\!\!-\! V^T_t\!\!-\! 2V^T\lambda((P^Tx_t\!\!-\! y)(P^Tx_t\! -\! y)^T\! \\
&- (P^Tx_t\!\! -\!\! y_t)(P^Tx_t\!-\! y_t)(P^Tx_t\! -\!y_t)^T) \\
\end{split}
\end{equation*}
from this equation, we can get that
\begin{equation*}\label{new2}
\begin{split}
V^T = &V^T_t(I\!\! -\!\! 2 \lambda ((P^Tx_t - y)(P^Tx_t -y)^T\\
&- (P^Tx_t - y_t)(P^Tx_t - y_t)^T))^{-1}
\end{split}
\end{equation*}
in which $I$ stands for an identity matrix.

Inspired by \cite{IMM2012-03274}, we use an approximation form of $V^T$ to make it easier for following calculation.
\begin{equation}\label{6}
\begin{split}
\bar{V}^T = &V^T_t(I\!\! +\!\! 2 \lambda ((P^Tx_t - y)(P^Tx_t -y)^T\\
&- (P^Tx_t - y_t)(P^Tx_t - y_t)^T))
\end{split}
\end{equation}
Define $Q = V_tV^T_t$, $A = (P^Tx_t - y)(P^Tx_t - y)^T - (P^Tx_t - y_t)(P^Tx_t - y_t)^T$. Plugging the approximation formula Eq.(\ref{6}) back into Eq.(\ref{5}), we get a cubic function $f(\lambda) = a\lambda^3 + b\lambda^2 + c\lambda$, $\lambda \in \mathbb{R}$, where
\begin{equation*}\label{new3}
\begin{split}
a = &4(P^Tx_t - y_t)^TA^TQA(P^Tx_t-y_t) \\
&- (P^Tx_t-y)^TA^TQA(P^Tx_t-y)\\
b = &2(||V^T_tA||^2_F - (P^Tx_t - y_t)^TQA(P^Tx_t - y_t) \\
&- (P^Tx_t - y_t)^TA^TQ(P^Tx_t - y_t) \\
&+ (P^Tx_t - y)^TQA(P^Tx_t - y) \\
&+ (P^Tx_t - y)^TA^TQ(P^Tx_t -y))\\
c = &(P^Tx_t\!\! - y)^TQ(P^Tx_t \!\!- y)\!\! -\!\! (P^Tx_t \!\!- y)^TQ(P^Tx_t\!\! - y) \\
&+ \Delta(y_t, y)\\
\end{split}
\end{equation*}

If $f(\lambda)$ is non-monotonic function when $\lambda > 0$, let $\beta > 0$ to be the maximum point of $f(\lambda)$.
%By analysing the monotonic of function $f(\lambda)$, we can get
We obtain,
\begin{equation}\label{7}
\begin{split}
\lambda_t=
\begin{cases}
m & if \quad f'(\lambda) < 0 \quad and\quad \lambda > 0, \quad \beta < m  \\
\beta & if \quad m < \beta < M \\
M & if\quad f'(\lambda) > 0 \quad and \quad \lambda > 0, \quad \beta > M
\end{cases}
\end{split}
\end{equation}
where $m, M \in \mathbb{R}$, $0 < m < M$
%\renewcommand{\algorithmicrequire}{\textbf{Input:}}
%\renewcommand{\algorithmicensure}{\textbf{Output:}}
%\begin{algorithm}[h!]
%   \caption{\label{alg:1} Online Multi-label Classification  }
%   \begin{algorithmic}[1]
%       \STATE Set $V_1 $ to a non-zero matrix
%       \STATE Initialize $D = \{x_i, y_i\}_{i=1}^S$
%       \FOR{$t= 0,1,2,\ldots,$ }
%       \STATE Receive pairwise instances: $(x_t, y_t)$
%       \STATE Find the Nearest Neighbor $(x, y) \in D$
%       \STATE Compute loss $l_t$ by Eq.(2)
%       \IF {$l_t > 0$,}
%       \STATE Set $\lambda_t = argmin f'(\lambda_t)$
%       \STATE Update $V^T = V^T_t(I\!\! -\!\! 2 \lambda_t A)^{-1}$
%       \ELSE
%       \STATE $V_{t+1} = V_t$
%       \ENDIF
%       \ENDFOR
%   \end{algorithmic}
%\end{algorithm}

\renewcommand{\algorithmicrequire}{\textbf{Input:}}
\renewcommand{\algorithmicensure}{\textbf{Output:}}
\begin{algorithm}[h!]
	\caption{\label{alg:1} Online Metric Learning for Multi-Label Classification}
	\begin{algorithmic}[1]
		\STATE Set $V_1 $ to a non-zero matrix
		\STATE Initialize $D = \{(x_i, y_i)\}_{i=1}^n$
		\FOR{$t= 1,2,\ldots,$ }
		\STATE Receive pairwise instances: $(x_t, y_t)$
		\STATE Find the Nearest Neighbour $(x, y) \in D$
		\STATE Compute loss $l_t$ by Eq.(\ref{8})
		\IF {$l_t > 0$}
		\STATE Set $\lambda_t$ as Eq.(\ref{7})
		\STATE Update $V^T = V^T_t(I - 2 \lambda_t A)^{-1}$
		\ELSE
		\STATE $V_{t+1} = V_t$
		\ENDIF
		\STATE Append current instances into $D$
		\ENDFOR
	\end{algorithmic}
\end{algorithm}

Algorithm \ref{alg:1} provides detail of optimization. We denote the loss suffered by our algorithm on round $t$ by $l_t$.

We focus on the situation when $l_t > 0$. The optimal solution comes from the one satisfying $\partial L / \partial V = 0$, $\partial L / \partial \lambda = 0$.  Based on the derivation, $V_{t+1}$ can be update by $V^T_{t+1} = V^T_t(I - 2 \lambda_t A)^{-1}$, where $A = (P^Tx_t - y)(P^Tx_t -y)^T - (P^Tx_t - y_t)(P^Tx_t - y_t)^T $.

Inspired by metric learning \cite{journals/ftml/Kulis13}, we use the learned metric to select $k$ nearest neighbours from $D$ for each testing instance, and conduct the predictions based on these $k$ nearest neighbours. The equation of the distance between codeword $x_j$ and $x_t$ in the embedding space can be computed as $(P^Tx_j-P^Tx_t)^TQ(P^Tx_j-P^Tx_t)$.

\begin{table}
	\caption{\rstyle{Training Time Complexities of Each Iteration and Testing Time Complexities of Each Testing Instance for all methods.}}
	\label{Tab:timeComplexity}
	\centering
	\small
	\begin{tabular}
		{|c|c|c|} \hline
		Method & Training Time & Testing Time \\ \hline
		OSML-ELM & $\mathcal{O}(np^2q)$& $\mathcal{O}(npq)$\\ \hline
		OLANSGD   &  $\mathcal{O}(n\log n)$ & $\mathcal{O}(pq)$ \\ \hline
		$k$NN  & -  & $\mathcal{O}(np)$ \\  \hline
		OML  &  $\mathcal{O}(np + dpq)$ &$\mathcal{O}(dpq+ nd)$  \\   \hline
	\end{tabular}
\end{table}

\subsection{Computational Complexity Analysis}

We compare the time complexities of our proposed method (i.e. OML) with three popular methods, which are OSML-ELM \citep{DBLP:journals/evs/VenkatesanEDPW17}, OLANSGD \citep{DBLP:conf/icassp/ParkC13} and $k$NN \citep{DBLP:conf/eccv/DengBLF10}.

The training time of OML is dominated by finding the nearest neighbour of each training instance and computing the loss $l_t$ in Eq.(\ref{8}). It takes $np$ time to search for the nearest neighbour from the training dataset while computing the loss with two projections embedded takes $dpq$ time. Thus, the time complexity is $\mathcal{O}(np + dpq)$. 

We analyze the testing time for each testing instance. The testing time of $k$NN involves the procedures of searching for the $k$ neareast neighbours of a testing instance from the training dataset which takes $np$ time. Our proposed PL-LMNN performs prediction in a similar way but differs in the additional procedure of projecting all instances into the embedding space of $d$ dimensions before searching for the neareast neighbours, therefore the testing time complexity of OML is $\mathcal{O}(dpq + nd)$. 

Moreover, training time complexities of each iteration and testing time complexities of each testing instance for other methods are listed in Table \ref{Tab:timeComplexity} for comparisions.

From Table~\ref{Tab:timeComplexity}, we can easily conclude that the training time complexity of OML in each iteration is lower than that of OSML-ELM and OLANSGD with respect to the number of training data $n$, which is usually much larger than the number of features $p$ and the number of labels $q$. Besides, the training time complexity of PL-$k$NN is denoted by $'-'$ as it has no training process. 

Moreover, for the testing time complexity, OML is lower than OSML-ELM and $k$NN with respect to the number of training data $n$. In addition, the reduced dimension $d$ of the new projected space is much smaller than the number of features as well as the number of labels. OLANSGD is the fastest in predicting among all methods, mainly because it performs prediction only by computing the label scores based on the learned model parameter. 

\section{Loss Bound}
\label{loss_bound}
Following the analysis in \cite{DBLP:journals/jmlr/CrammerDKSS06}, we state the upper bounds for our online metric learning algorithm.  Let $U = (u_1,u_2,...,u_d) \in \mathbb{R}^{q \times d} (d < q)$ be an arbitrary matrix. We use the approximate form given in Eq.(\ref{6}) to replace $V^T$.

\begin{lemma}
	\label{lemma_1}
	Let $\lambda_t$ as defined in Eq.(\ref{7}), $V = (v_1,v_2,...,v_d) \in \mathbb{R}^{q \times d} (d < q)$, $V_1$ is a non-zero matrix. The following bound holds for any $U \in \mathbb{R}^{q \times d} (d < q)$
	\begin{equation*}
	\begin{split}
	||V_1 - U||^2_F - ||V_{T+1} - U||^2_F \leq ||V_1 - U||^2_F
	\end{split}
	\end{equation*}
\end{lemma}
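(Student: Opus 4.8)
The plan is to recognize that, despite its appearance, the left-hand side is a telescoping quantity, after which the stated inequality reduces to the nonnegativity of a squared norm. First I would rewrite
\begin{equation*}
||V_1 - U||^2_F - ||V_{T+1} - U||^2_F = \sum_{t=1}^{T}\left( ||V_t - U||^2_F - ||V_{t+1} - U||^2_F \right),
\end{equation*}
which holds by cancellation of consecutive terms. This per-round decomposition is the form that is actually useful downstream: each summand $||V_t - U||^2_F - ||V_{t+1} - U||^2_F$ measures the ``progress'' made on round $t$ and will later be lower bounded in terms of the round loss $l_t$, using the update $V^T_{t+1} = V^T_t(I - 2\lambda_t A)^{-1}$, its approximation in Eq.(\ref{6}), and the admissible range $m \leq \lambda_t \leq M$ from Eq.(\ref{7}). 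On rounds where $l_t = 0$ we have $V_{t+1} = V_t$ and the summand vanishes, so only the active rounds contribute.

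Second, the only substantive step: $||V_{T+1} - U||^2_F \geq 0$, simply because it is a squared Frobenius norm. Subtracting a nonnegative quantity from $||V_1 - U||^2_F$ cannot increase it, hence
\begin{equation*}
||V_1 - U||^2_F - ||V_{T+1} - U||^2_F \leq ||V_1 - U||^2_F,
\end{equation*}
which is the claim. No case analysis on $l_t$ is needed for this bound itself — the inequality $||V_{T+1} - U||^2_F \geq 0$ is unconditional — and the hypothesis that $V_1$ is non-zero is not used here (it is needed elsewhere to keep the update $V^T_{t+1} = V^T_t(I - 2\lambda_t A)^{-1}$ well defined).

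The main obstacle is presentational rather than mathematical: read literally, the lemma is of the form $a - b \leq a$ with $b \geq 0$, so its real content is the telescoping identity that packages the per-round progress, while the genuine work — lower bounding each summand by a function of $l_t$, $m$, and $M$, then summing to get a bound on $\sum_t l_t$ — is deferred to the theorem that invokes this lemma. Accordingly, in writing the proof I would keep the telescoping identity explicit so the role of this bookkeeping step is clear, and remark that equality holds only in the degenerate case $V_{T+1} = U$.
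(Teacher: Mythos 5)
Your proof is correct and matches the paper's argument: the paper likewise defines $\Psi_t = ||V_t - U||^2_F - ||V_{t+1} - U||^2_F$, telescopes the sum over $t = 1, \dots, T$, and concludes from the nonnegativity of $||V_{T+1} - U||^2_F$. Your added observations (that the lemma's real content is the telescoping decomposition used downstream, and that the non-zero hypothesis on $V_1$ is not needed here) are accurate.
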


\begin{proof}
	Define $\Psi_t = ||V_t - U||^2_F - ||V_{t+1} - U||^2_F$, this lemma is proved by summing $\Psi_t$ over all $t$ in $1, \dots, T$ and the bounding of this sum is obviously as followed,
	\begin{equation*}
	\begin{split}
	\sum_{t=1}^T\Psi_t &= ||V_1 - U||^2_F\! - \!||V_{T+1} - U||^2_F
	\!\!\leq ||V_1 - U||^2_F
	\end{split}
	\end{equation*}
\end{proof}

\begin{lemma}
	\label{lemma_2}
	Assume there exists some $U$ such that $4\lambda_t \langle U, A^TV_t \rangle_F - 4 \lambda_t^2 ||A^TV_t|| \geq 5\lambda_t \langle V_t, A^TV_t \rangle_F + \frac{qc^2 \lambda_t}{(||P||^2_Fr + q)} $, $\forall t \in \{1,2,..., T\}$. Let $V = (v_1,v_2,...,v_d) \in \mathbb{R}^{q \times d} (d < q)$. $\lambda_t$ as defined as in Eq.(\ref{7}). $V_1$ is a non-zero matrix. $c$ is defined in the proof Eq.(\ref{8}). We bound cumulative $||V_t||^2_F$ as follows,
	\begin{equation*}
	\sum_{t=1}^T ||V_t||^2_F  \leq \frac{||V_1 - U||^2_F}{m \cdot c^2} - \frac{q \cdot T}{(||P||^2_Fr + q)}
	\end{equation*}
\end{lemma}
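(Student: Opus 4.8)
The plan is to run the standard potential-function argument of online learning (as in \cite{DBLP:journals/jmlr/CrammerDKSS06}) with potential $||V_t-U||^2_F$, and then feed the outcome into the telescoping bound of Lemma~\ref{lemma_1}. Throughout I restrict to the rounds on which $l_t>0$, since otherwise $V_{t+1}=V_t$ and the update is vacuous; I also work with the approximate update of Eq.(\ref{6}) and use that $A=(P^Tx_t-y)(P^Tx_t-y)^T-(P^Tx_t-y_t)(P^Tx_t-y_t)^T$ is symmetric (a difference of symmetric rank-one matrices), so that $\bar V^T_{t+1}=V^T_t(I+2\lambda_t A)$ is the same as $V_{t+1}=(I+2\lambda_t A)V_t$, hence $V_{t+1}-V_t=2\lambda_t AV_t$ with $A^TV_t=AV_t$.

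First I would set $\Psi_t=||V_t-U||^2_F-||V_{t+1}-U||^2_F$, exactly the quantity telescoped in Lemma~\ref{lemma_1}. Expanding $||V_{t+1}-U||^2_F=||V_t-U||^2_F+2\langle V_t-U,V_{t+1}-V_t\rangle_F+||V_{t+1}-V_t||^2_F$ and substituting $V_{t+1}-V_t=2\lambda_t AV_t$ gives
\[
\Psi_t=-4\lambda_t\langle V_t-U,A^TV_t\rangle_F-4\lambda_t^2||A^TV_t||^2_F
      =4\lambda_t\langle U,A^TV_t\rangle_F-4\lambda_t^2||A^TV_t||^2_F-4\lambda_t\langle V_t,A^TV_t\rangle_F .
\]
The hypothesis of the lemma is tailored to lower-bound the first two terms of this expression, by $5\lambda_t\langle V_t,A^TV_t\rangle_F+\frac{qc^2\lambda_t}{||P||^2_Fr+q}$; substituting it and cancelling the $\langle V_t,A^TV_t\rangle_F$ terms yields the clean per-round estimate
\[
\Psi_t\ \ge\ \lambda_t\langle V_t,A^TV_t\rangle_F+\frac{qc^2\lambda_t}{||P||^2_Fr+q}.
\]

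Next I would convert the right-hand side into a bound of the shape $m c^2||V_t||^2_F+\frac{m c^2 q}{||P||^2_Fr+q}$. The additive term is immediate since $\lambda_t\ge m>0$ (Eq.(\ref{7})) and $c^2\ge0$. For the first term I would unfold the definitions of $c$ and $A$: writing $u=P^Tx_t-y$ and $w=P^Tx_t-y_t$, one has $\langle V_t,A^TV_t\rangle_F=||V^T_tu||^2_2-||V^T_tw||^2_2$, which coincides with $c$ up to the margin term $\Delta(y_t,y)$; and the boundedness of the data — $||x_t||^2_2\le r$ and $y,y_t\in\{0,1\}^q$, so $||y||^2_2,||y_t||^2_2\le q$ — together with the operator-norm inequality $||V^T_tw||_2\le||V_t||_F||w||_2$ controls these quantities by $||V_t||^2_F$ times a multiple of $||P||^2_Fr+q$. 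Assembling these estimates (and again using $\lambda_t\ge m$) gives $\Psi_t\ge m c^2||V_t||^2_F+\frac{m c^2 q}{||P||^2_Fr+q}$. Summing over $t=1,\dots,T$ and invoking Lemma~\ref{lemma_1}, $\sum_{t=1}^T\Psi_t\le||V_1-U||^2_F$, produces $m c^2\sum_{t=1}^T||V_t||^2_F+\frac{m c^2 q T}{||P||^2_Fr+q}\le||V_1-U||^2_F$, and dividing by $m c^2$ and rearranging gives the claimed bound.

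I expect the genuinely delicate step to be the third one: relating $\langle V_t,A^TV_t\rangle_F$ to $||V_t||^2_F$ with exactly the normalizing factor $||P||^2_Fr+q$ — in particular identifying $r$ as the intended uniform bound $\max_t||x_t||^2_2$, which the statement does not spell out — and controlling the sign of $\langle V_t,A^TV_t\rangle_F=||V^T_tu||^2_2-||V^T_tw||^2_2$, which need not be nonnegative; this is presumably why the hypothesis is phrased through an auxiliary comparison matrix $U$ rather than self-bounding. A secondary point to treat carefully is that Eq.(\ref{6}) is only a first-order surrogate for the exact update $V^T_t(I-2\lambda_t A)^{-1}$, so all the inequalities above should be justified for the surrogate dynamics (the regime in which Lemma~\ref{lemma_1} and the statement are posed). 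The telescoping itself and the use of $\lambda_t\in[m,M]$ are routine once the per-round inequality is in hand.
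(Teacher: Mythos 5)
Your first two steps reproduce the paper's argument exactly: the same potential $\Psi_t=||V_t-U||^2_F-||V_{t+1}-U||^2_F$, the same expansion of the surrogate update $V_{t+1}=(I+2\lambda_t A)^TV_t$ into $\Psi_t=4\lambda_t\langle U,A^TV_t\rangle_F-4\lambda_t^2||A^TV_t||^2_F-4\lambda_t\langle V_t,A^TV_t\rangle_F$, the same use of the hypothesis on $U$ to reach $\Psi_t\ge\lambda_t\langle V_t,A^TV_t\rangle_F+\frac{qc^2\lambda_t}{||P||^2_Fr+q}$, and the same final summation with $\lambda_t\ge m$ and the telescoping of Lemma~\ref{lemma_1}.

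The gap is in your third step, and you have correctly located but not closed it. To finish you need a \emph{lower} bound $\langle V_t,A^TV_t\rangle_F\ge c^2||V_t||^2_F$, whereas the route you sketch --- operator-norm control of $||V_t^Tu||^2_2$ and $||V_t^Tw||^2_2$ via $||x_t||^2_2\le r$ and $||y||^2_2\le q$ --- only yields \emph{upper} bounds on these quantities, i.e.\ it bounds $|\langle V_t,A^TV_t\rangle_F|$ from above by a multiple of $||V_t||^2_F(||P||^2_Fr+q)$, which is the wrong direction. You also note that $\langle V_t,A^TV_t\rangle_F=||V_t^Tu||^2_2-||V_t^Tw||^2_2$ need not be nonnegative, but your suggested escape (the hypothesis on $U$) cannot help: that hypothesis has already been spent in deriving the per-round inequality and says nothing further about the sign of $\langle V_t,A^TV_t\rangle_F$. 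The paper closes this step by a spectral surrogate: since $A$ is symmetric it is diagonalized as $A=\bar U\bar A\bar U^T$, the non-positive eigenvalues in $\bar A$ are replaced by the minimum non-negative singular value to form a PSD matrix $\hat A$, and then $\langle V_t,\hat A^TV_t\rangle_F=||\hat A^{1/2}V_t||^2_F\ge c^2||V_t||^2_F$, where $c$ is defined as the minimum singular value of $\hat A^{1/2}$ --- not, as you guessed, a quantity read off from the loss in Eq.(\ref{8}). Your misidentification of $c$ ("coincides with $c$ up to the margin term") and the absence of any PSD-surrogate or spectral argument mean the decisive inequality $\Psi_t\ge c^2\lambda_t||V_t||^2_F+\frac{qc^2\lambda_t}{||P||^2_Fr+q}$ is never actually established. (For what it is worth, the paper's own substitution of $\hat A$ for $A$ inside an inequality that was derived for $A$ is itself unjustified, so the difficulty you flagged is real and not fully resolved by the published proof either.)
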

\begin{proof}
	By using the operation of Frobenius norm,
	\begin{equation*}
	||A+B||^2_F = ||A||^2_F + ||B||^2_F + 2\langle A, B\rangle_F
	\end{equation*}
	where $\langle \cdot \rangle_F$ is the Frobenius inner product, we can get
	\begin{equation*}
	\begin{split}
	\Psi_t &= ||V_t - U||^2_F - ||V_{t+1} - U||^2_F \\
	&= ||V_t - U||^2_F - ||(I +2\lambda_tA)^TV_t -U||^2_F \\
	&= ||V_t - U||^2_F - ||V_t - U||^2_F -4\lambda_t^2||A^TV_t||^2_F \\
	&\quad - 4\lambda_t\langle V_t - U, A^TV_t \rangle_F \\
	&= -4\lambda_t \langle V_t, A^TV_t \rangle_F + 4\lambda_t \langle U, A^TV_t \rangle_F \\
	&\quad - 4\lambda_t^2||A^TV_t||^2_F
	\end{split}
	\end{equation*}
	Using the assumption in Lemma \ref{lemma_2}, we can get that $\Psi_t \geq \lambda_t \langle V_t, A^TV_t \rangle_F + \frac{qc^2 \lambda_t}{(||P||^2_Fr + q)}$.
	where $A = (P^Tx_t - y)(P^Tx_t -y)^T - (P^Tx_t - y_t)(P^Tx_t - y_t)^T$. It is clearly that $A$ is a symmetric matrix. We take the SVD of $A$ as $A = \bar{U}\bar{A}\bar{U}^T$, then using the minimum non-negative singular value of $A$ to replace the non-positive element in matrix $\bar{A}$, and denote approximation form of matrix $A$ as $\hat{A}$. Apparently, $\hat{A}$ is a non-negative symmetric matrix. Furthermore, by using definition of Frobenius inner product $\langle A, B \rangle_F = Trace(A^TB)$, where $Trace(A) = \sum_{i=1}^{n}a_{ii}$, we can get that
	\begin{equation*}
	\begin{split}
	\langle V_t, \hat{A}^TV_t \rangle_F &= Trace(V^T_t\hat{A}^TV_t) \\
	&= Trace(V_t^T(\hat{A}^{\frac{1}{2}T})\hat{A}^{\frac{1}{2}}V_t) \\
	&= Trace((\hat{A}^{\frac{1}{2}}V_t)^T\hat{A}^{\frac{1}{2}}V_t) \\
	&= ||\hat{A}^{\frac{1}{2}}V_t||^2_F
	\end{split}
	\end{equation*}
	Taking the SVD of $\hat{A}^\frac{1}{2}$ as $\hat{A}^\frac{1}{2} = U^*A^*U^{*T}$. Since matrix $U^*$ is a unitary matrix, then $||U^*B||^2_F = ||B||^2_F$, $\forall B \in \mathbb{R}^{q \times q}$. Let $c$ be the minimum singular value of $A^*$, getting that
	\begin{equation}
	\begin{split}
	||\hat{A}^{\frac{1}{2}}V_t||^2_F &= ||U^*A^*U^{*T}V_t||^2_F \\
	&= ||A^*U^{*T}V_t||^2_F \\
	&\geq ||cIU^{*T}V_t||^2_F \\
	&\geq c^2||V_t||^2_F
	\end{split}
	\end{equation}
	where $I$ is an identity matrix. Now, we get that,
	\begin{equation*}
	\Psi_t \geq c^2\lambda_t||V_t||^2_F + \frac{qc^2 \lambda_t}{(||P||^2_Fr + q)}
	\end{equation*}
	By summing both side of inequality on $t$ over all $t$ in $1, \dots, T$, and using that $m \leq \lambda_t \leq M$, gives that
	\begin{equation*}
	\begin{split}
	\sum_{t=1}^T c^2 \cdot m ||V_t||^2_F + \frac{T \cdot qc^2m}{(||P||^2_Fr + q)} \leq ||V_1 - U||^2_F \\
	\end{split}
	\end{equation*}
	Then, we can get that
	\begin{equation*}
	\begin{split}
	\sum_{t=1}^T ||V_t||^2_F  \leq \frac{||V_1 - U||^2_F}{m \cdot c^2} - \frac{q \cdot T}{(||P||^2_Fr + q)} \\
	\end{split}
	\end{equation*}
	Lemma \ref{lemma_2} has been proved.
\end{proof}
Based on the Lemma \ref{lemma_2}, we provide following theorem.

\begin{theorem}
	Let $(x_1, y_1),\dots,(x_T,y_T)$ be a sequence of examples where $x_t \in \mathbb{R}^{p \times 1}$ and $y_t \in \{0, 1\}^{q \times 1}$. $V_t = (v_1,v_2,...,v_d) \in \mathbb{R}^{q \times d} (d < q)$ is projection matrix, $q$ is in $\mathbb{R}^n$. $V_1$ is a non-zero matrix .$U \in \mathbb{R}^{q \times d} (d < q)$. Let $r$ be the upper bound of $||x_t||^2_2$. Under the assumption of Lemma \ref{lemma_2}, the cumulative loss suffered on the sequence is bounded as follows,
	\begin{equation*}
	\begin{split}
	\sum_{t=1}^T l_t \leq \frac{||V_1 - U||^2_F(||P||^2_F \cdot r + q)}{m \cdot c^2}
	\end{split}
	\end{equation*}
\end{theorem}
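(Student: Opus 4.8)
The plan is to obtain the theorem as an immediate corollary of Lemma~\ref{lemma_2}, once the per-round loss $l_t$ is controlled by $||V_t||_F^2$. First I would rewrite the conclusion of Lemma~\ref{lemma_2}: multiplying both sides of $\sum_{t=1}^T ||V_t||_F^2 \le \frac{||V_1-U||_F^2}{m c^2} - \frac{qT}{||P||_F^2 r + q}$ by the strictly positive quantity $(||P||_F^2 r + q)$ and moving $qT$ to the left gives
\begin{equation*}
(||P||_F^2 r + q)\sum_{t=1}^T ||V_t||_F^2 + qT \;\le\; \frac{||V_1 - U||_F^2\,(||P||_F^2 r + q)}{m\,c^2}.
\end{equation*}
Hence it suffices to establish the ``data term'' inequality $\sum_{t=1}^T l_t \le (||P||_F^2 r + q)\sum_{t=1}^T ||V_t||_F^2 + qT$, which I would get by summing a per-round bound of the form $l_t \le (||P||_F^2 r + q)\,||V_t||_F^2 + q$.

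For the per-round bound I would only need to treat rounds with $l_t > 0$, since when $l_t = 0$ the inequality holds trivially (the right-hand side is nonnegative). On a round with $l_t>0$, by the definition in Eq.(\ref{8}) and by dropping the nonnegative term $||V_t^T P^T x_t - V_t^T y||_2^2$,
\begin{equation*}
l_t = \Delta(y_t, y) - \big(||V_t^T P^T x_t - V_t^T y||_2^2 - ||V_t^T P^T x_t - V_t^T y_t||_2^2\big) \;\le\; \Delta(y_t, y) + ||V_t^T (P^T x_t - y_t)||_2^2 .
\end{equation*}
Since $y_t, y \in \{0,1\}^{q\times 1}$ we have $\Delta(y_t, y) = ||y_t - y||_1 \le q$, which supplies the additive $q$. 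For the quadratic term I would use $||V_t^T z||_2 \le ||V_t||_F\,||z||_2$ together with $||P^T x_t||_2 \le ||P||_F\,||x_t||_2$, the boundedness hypothesis $||x_t||_2^2 \le r$, and $||y_t||_2^2 = ||y_t||_1 \le q$, to bound $||P^T x_t - y_t||_2^2 \le ||P||_F^2 r + q$, hence $||V_t^T(P^T x_t - y_t)||_2^2 \le (||P||_F^2 r + q)\,||V_t||_F^2$. Adding the two pieces gives the claimed per-round bound; summing over $t=1,\dots,T$ and chaining with the rearranged form of Lemma~\ref{lemma_2} yields exactly $\sum_{t=1}^T l_t \le ||V_1 - U||_F^2(||P||_F^2 r + q)/(m c^2)$.

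The step I expect to be delicate is the estimate $||P^T x_t - y_t||_2^2 \le ||P||_F^2 r + q$: expanding the square leaves a cross term $-2\langle P^T x_t, y_t\rangle$ that is not automatically nonpositive, so it must be justified either by a nonnegativity/normalization property of the embeddings (natural here, since $P$ is the least-squares map with targets in $\{0,1\}^{q}$, so $P^T x_t$ is close to the nonnegative vector $y_t$) or, failing that, by absorbing the cross term into a slightly enlarged constant. One also has to keep the sign conventions of Eq.(\ref{8}) and of the cubic coefficients $a,b,c$ consistent with the identity $||V_t^T P^T x_t - V_t^T y||_2^2 - ||V_t^T P^T x_t - V_t^T y_t||_2^2 = \langle V_t, A^T V_t\rangle_F$ that is implicitly used throughout the loss-bound section.
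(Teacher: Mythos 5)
Your proposal follows essentially the same route as the paper: bound $l_t$ by $\Delta(y_t,y) + \|V_t^T(P^Tx_t - y_t)\|_2^2$, control $\Delta(y_t,y)$ by $q$ and $\|P^Tx_t-y_t\|_2^2$ by $\|P\|_F^2 r + q$, then sum and invoke Lemma~\ref{lemma_2}; your rearrangement of the lemma's conclusion is just an algebraically equivalent presentation of the paper's final chain of inequalities. The delicate step you flag is real --- the paper simply asserts $\|P^Tx_t - y_t\|_2^2 \leq \|P^Tx_t\|_2^2 + \|y_t\|_2^2$ without addressing the cross term $-2\langle P^Tx_t, y_t\rangle$, so your observation identifies an unjustified step in the paper's own argument rather than a defect in yours.
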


\begin{proof}
	By using Eq.(\ref{8}), we get that
	\begin{equation*}
	\begin{split}
	l_t &\leq \Delta(y_t, y) + ||V^T_tP^Tx_t - V^T_ty_t||^2_2 \\
	\end{split}
	\end{equation*}
	and,
	\begin{equation*}
	\begin{split}
	||P^Tx_t - y_t||^2_2 &\leq ||P^Tx_t||^2_2 + ||y_t||^2_2 \leq ||P^T||^2_F\cdot r + q
	\end{split}
	\end{equation*}
	
	Since $\Delta(y_t, y)$ is defined as $l_1$ norm, therefore $\Delta(y_t, y)$ is bounded by $q$. we can get $y$ is bounded by $q$ as well. By using Lemma \ref{lemma_2}, we can get,
	\begin{equation*}
	\begin{split}
	\sum_{t=1}^Tl_t &\leq T\cdot q + \sum_{t=1}^T||V_t||^2_F \cdot ||P^Tx_t - y_t||^2_2\\
	&\leq T\cdot q + \sum_{t=1}^T ||V_t||^2_F \cdot (||P||^2_F\cdot r + q) \\
	&\leq T\cdot q + (\frac{||V_1 - U||^2_F}{m \cdot c^2} - \frac{q \cdot T}{(||P||^2_Fr + q)} )(||P||^2_F\cdot r + q) \\
	&\leq \frac{||V_1 - U||^2_F(||P||^2_F \cdot r + q)}{m \cdot c^2}
	\end{split}
	\end{equation*}
\end{proof}

Therefore, the cumulative loss is bounded by $\frac{||V_1 - U||^2_F(||P||^2_F \cdot r + q)}{m \cdot c^2}$. As $l_t$ is bounded, it guarantees the performance of our proposed model for unseen data.
\section{Experiments}
\label{Experiments}
In this section, we conduct experiments to evaluate the prediction performance of the proposed OML for online multi-label classification, and compare it with several state-of-the-art methods. All experiments are conducted on a workstation with 3.20GHz Intel CPU and 16GB main memory, running the Windows 10 platform.
%All the methods compared are implemented in MatLab.
%Since as far as we know, it could be the first attempt to develop online metric learning model for streaming data. Therefore, we select two online classificaion approaches for comparison, OSML \cite{DBLP:journals/evs/VenkatesanEDPW17} and ICASSP \cite{DBLP:conf/icassp/ParkC13}. Experiment are conduct as followed,
\subsection{Datasets}
We conduct experiments on eight benchmark datasets: Corel5k \citep{DBLP:conf/eccv/DuyguluBFF02}, Enron \footnote{http://bailando.sims.berkeley.edu/enron\_email.html}, Medical \citep{DBLP:conf/bionlp/PestianBMHJCD07}, Emotions \citep{DBLP:conf/ismir/TrohidisTKV08}, Cal500 \citep{DBLP:conf/eccv/DuyguluBFF02}, Image \citep{DBLP:journals/pr/ZhangZ07}, scene \citep{DBLP:journals/pr/BoutellLSB04}, slashdot \footnote{http://waikato.github.io/meka/datasets}.  
%The statistics of these datasets can be found on website\footnote{http://mulan.sourceforge.net}.
The datasets are collected from different domains, such as images (i.e. Corel5k, Image, scene), music (i.e. Emotions, Cal500) and text (i.e. Enron, Medical, slashdot). The statistics of these datasets can be found in Table~\ref{Tab:DataRealWorld}.
\begin{table*}[tbp] \small
	\caption{Statistics of multi-label benchmark datasets.}
	\label{Tab:DataRealWorld}
	\centering
	\begin{tabular}{l|ccccl}
		\hline
		Datasets & \#Instances & \#Features & \#Labels & \#Domain  \\
		\hline \hline
		Corel5k & 5000 & 499 & 374 &   images \\
		Emotions & 593 & 72 & 6 & music  \\
		Enron &1702  & 1001& 53 & text  \\
		Medical & 978 & 1449 & 45 & text \\
		Cal500 & 502 & 68 & 174  & music \\
		Image & 2000 & 103 & 14 & image \\
		scene & 2407 & 294 & 6 & image \\
		slashdot  & 3782 & 103 & 14 & text \\
		\hline
	\end{tabular}
\end{table*}
\begin{figure*}[!t]
	\centering
	\subfigure[CoreI5k]{
		\includegraphics[width=0.23\textwidth]{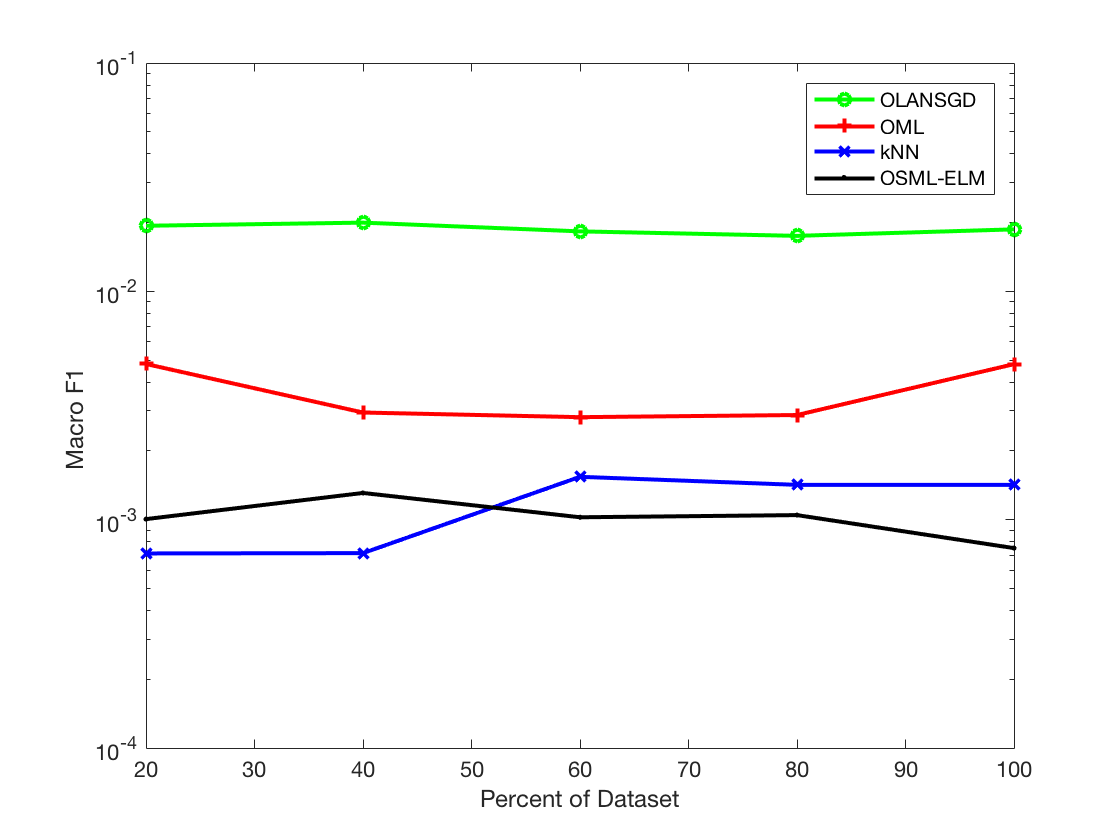}
	}
	\subfigure[Enron]{
		\includegraphics[width=0.23\textwidth]{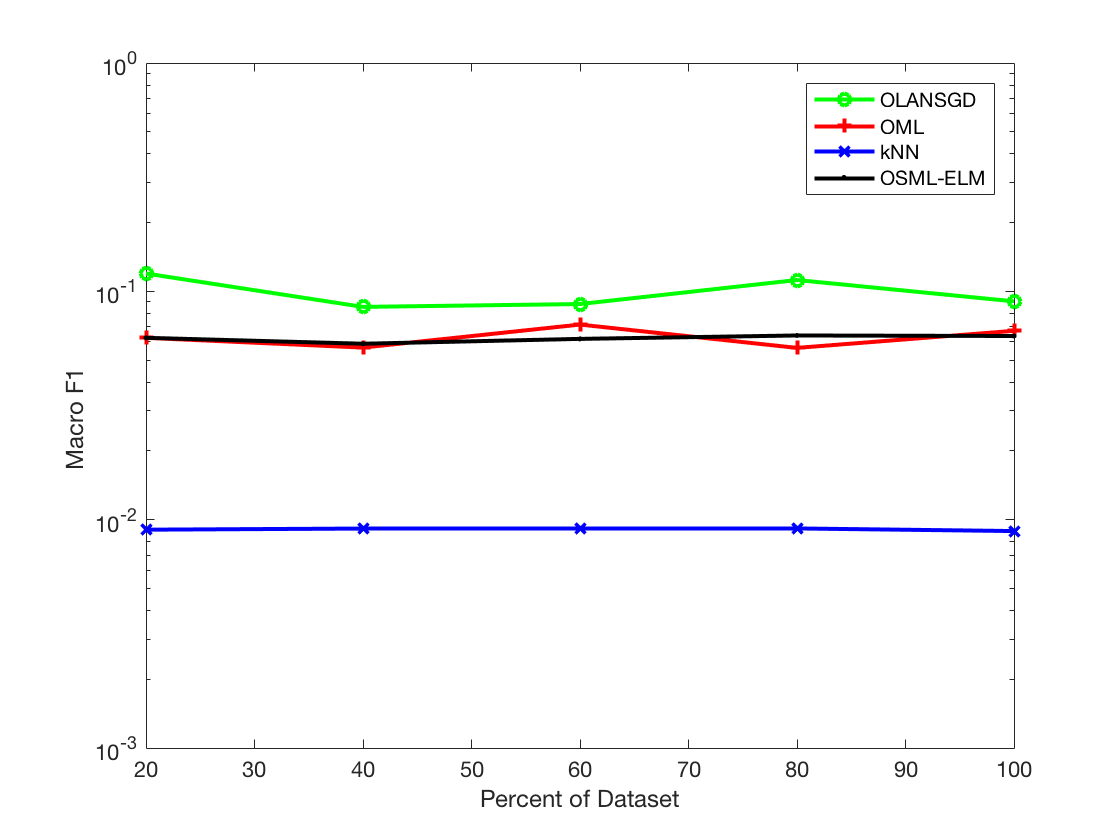}
	}
	\subfigure[FullMedical]{
		\includegraphics[width=0.23\textwidth]{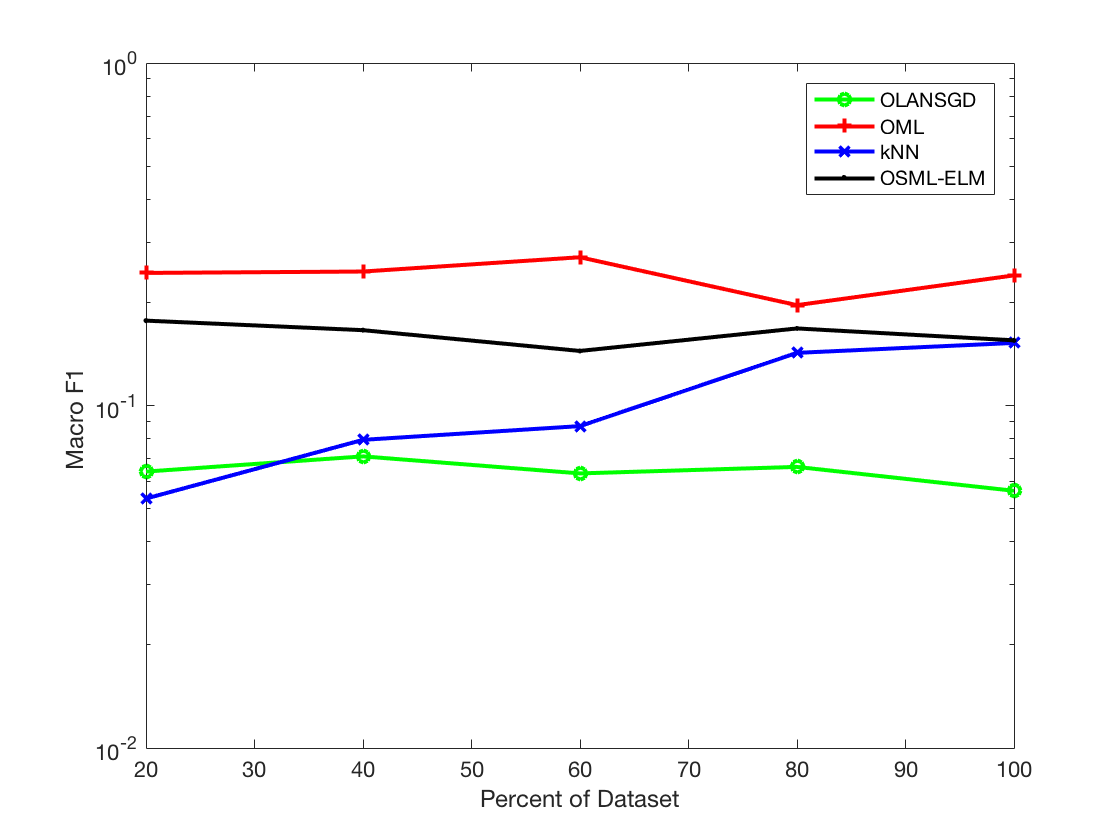}
	}
	\subfigure[Emotions]{
		\includegraphics[width=0.23\textwidth]{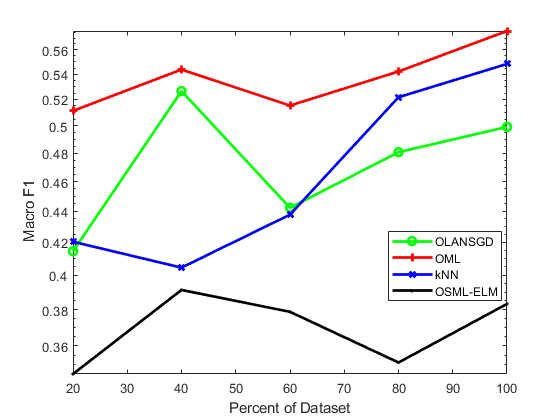}
	}
	\caption{Macro F1 of various methods on Corel5k, Enron, Medical and Emotions datasets.
		% We do not report the training time of $k$NN, as it does not need training.
	}
	\label{Experimentresults1}
\end{figure*}
\begin{figure*}[!t]
	\centering
	\subfigure[CoreI5k]{
		\includegraphics[width=0.23\textwidth]{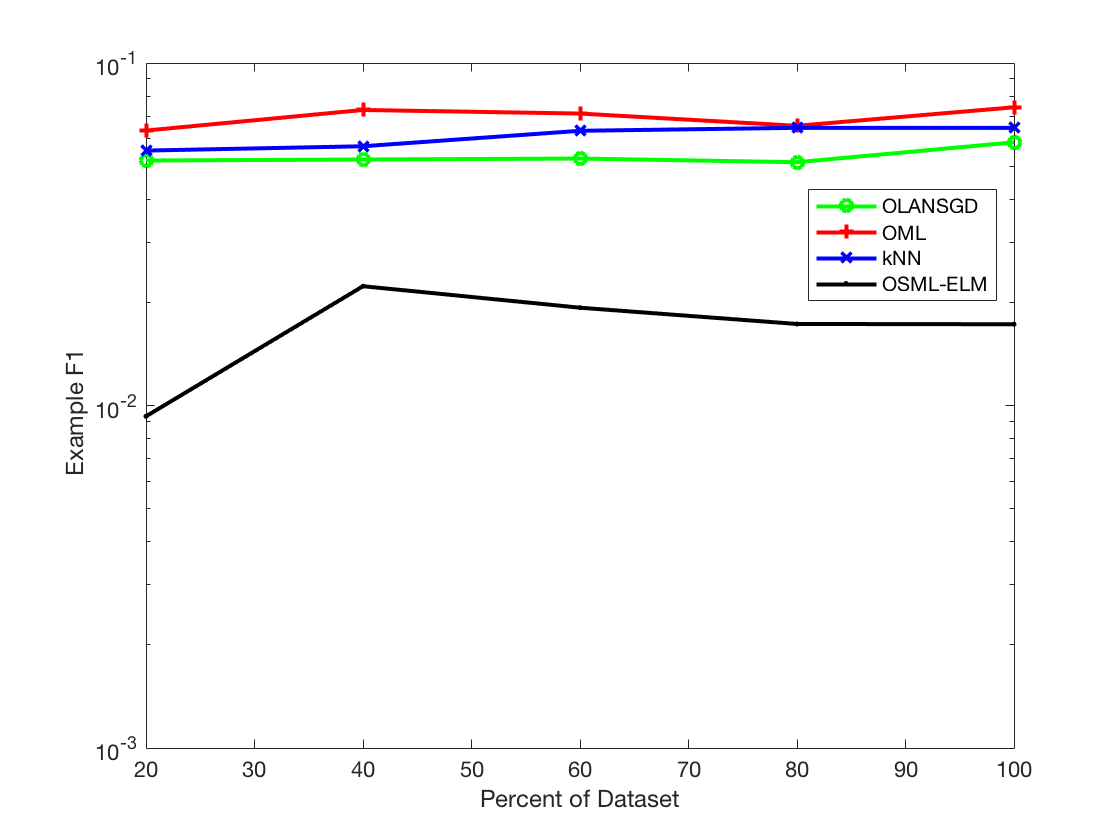}
	}
	\subfigure[Enron]{
		\includegraphics[width=0.23\textwidth]{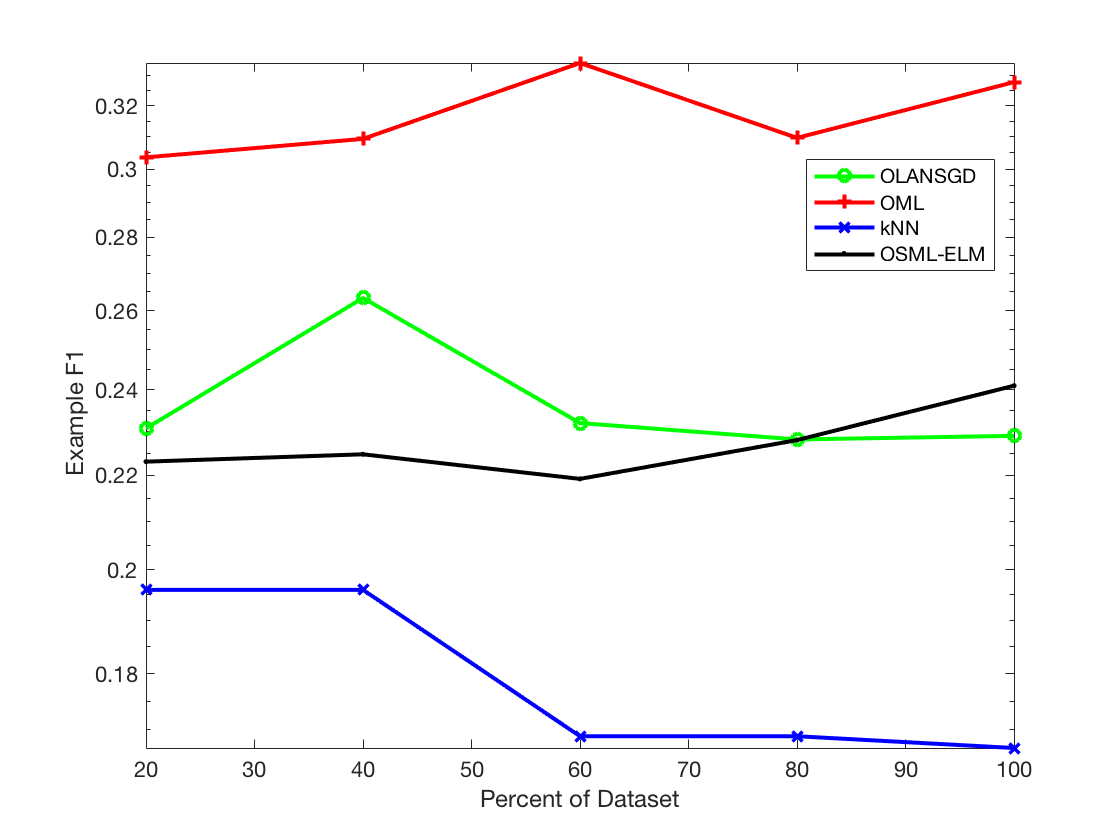}
	}
	\subfigure[FullMedical]{
		\includegraphics[width=0.23\textwidth]{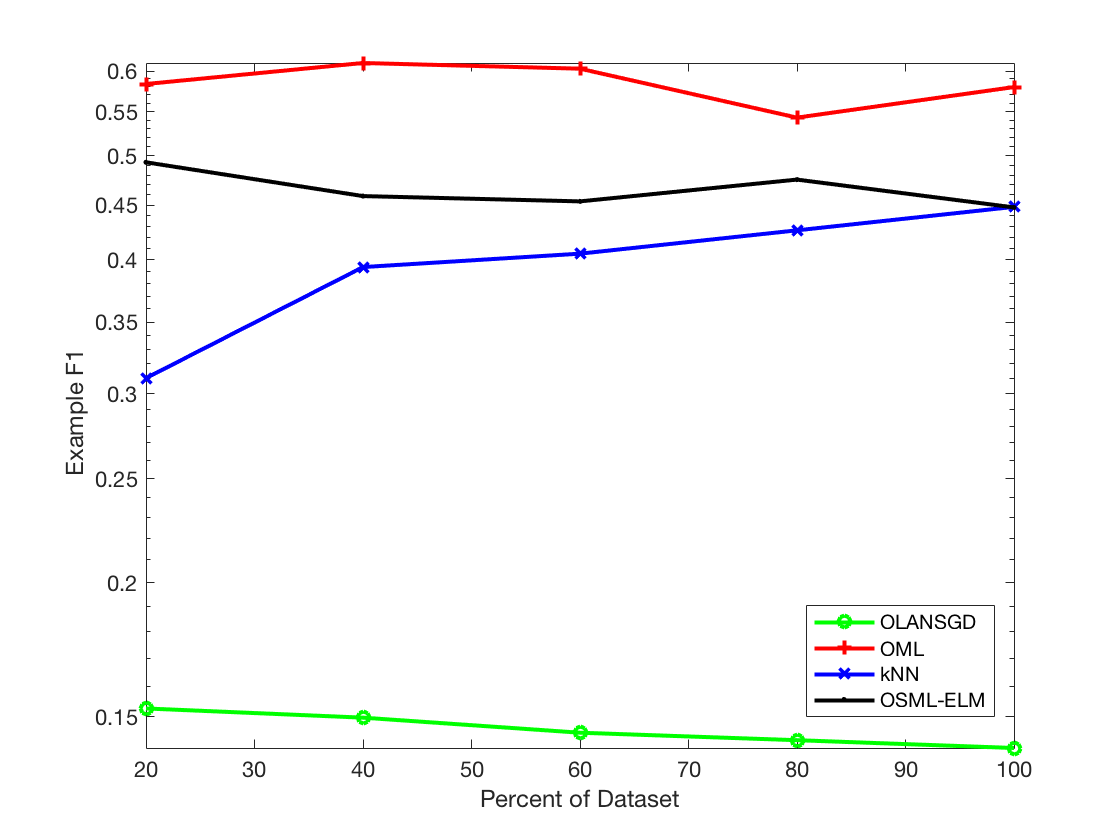}
	}
	\subfigure[Emotions]{
		\includegraphics[width=0.23\textwidth]{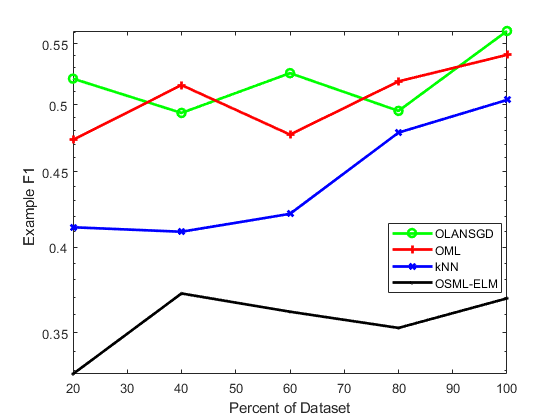}
	}
	\caption{Example F1 of various methods on Corel5k, Enron, Medical and Emotions datasets.
		% We do not report the training time of $k$NN, as it does not need training.
	}
	\label{Experimentresults2}
\end{figure*}

\begin{figure*}[!t]
	\centering
	\subfigure[CoreI5k]{
		\includegraphics[width=0.23\textwidth]{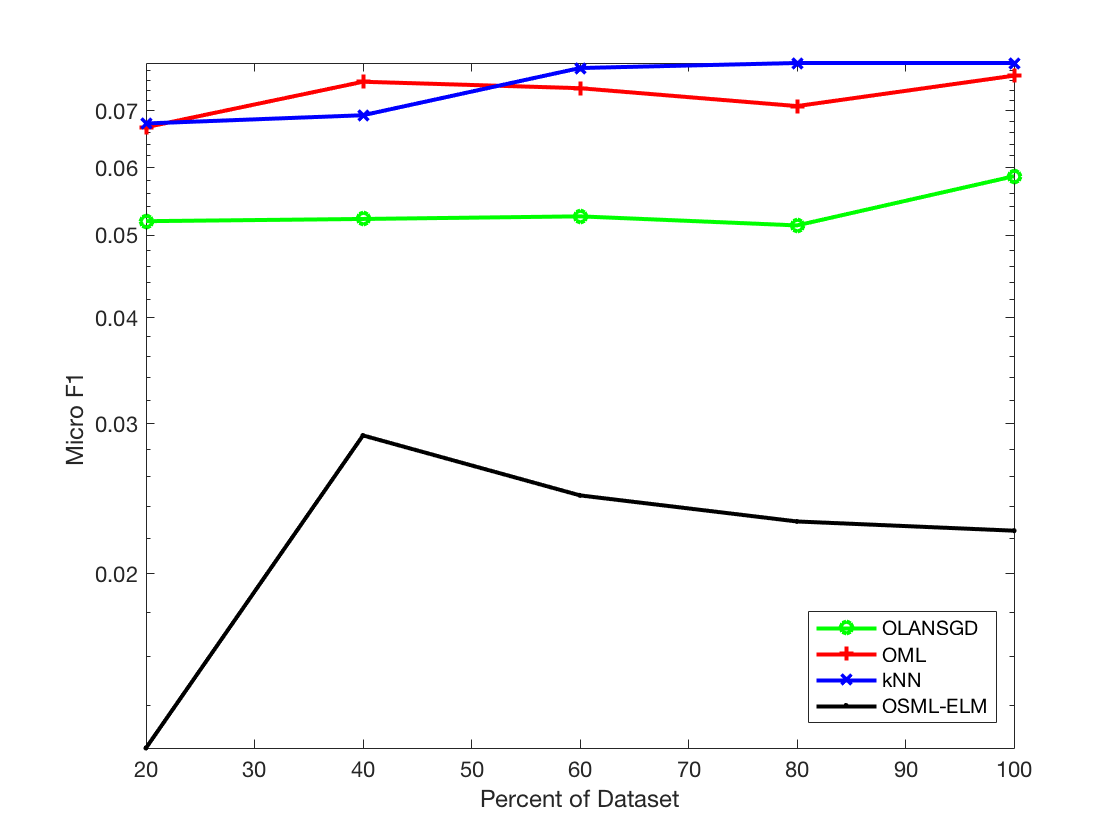}
	}
	\subfigure[Enron]{
		\includegraphics[width=0.23\textwidth]{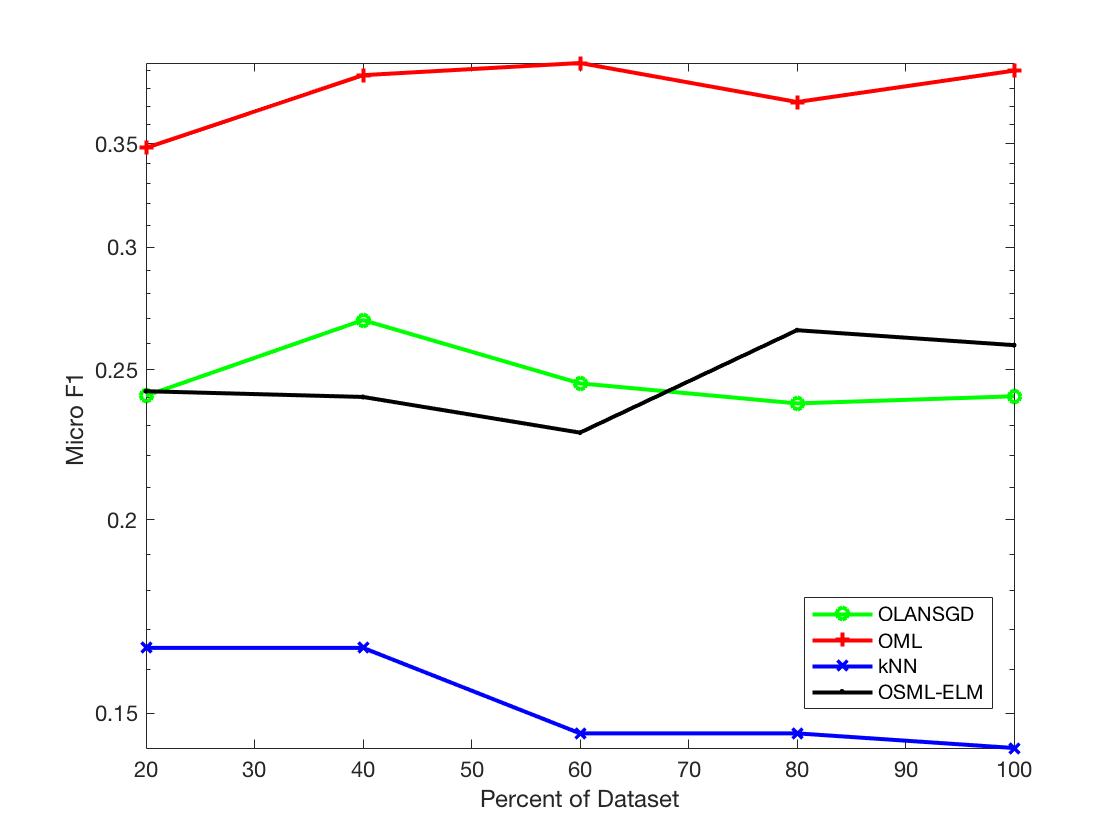}
	}
	\subfigure[FullMedical]{
		\includegraphics[width=0.23\textwidth]{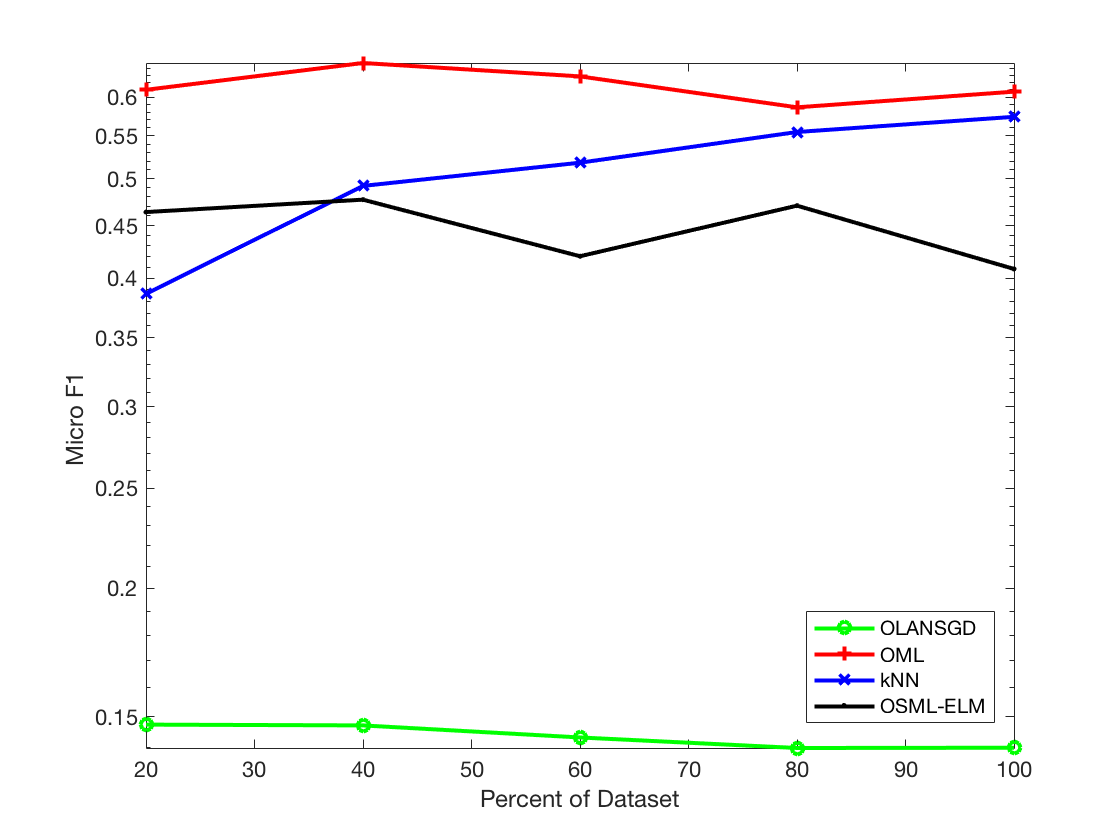}
	}
	\subfigure[Emotions]{
		\includegraphics[width=0.23\textwidth]{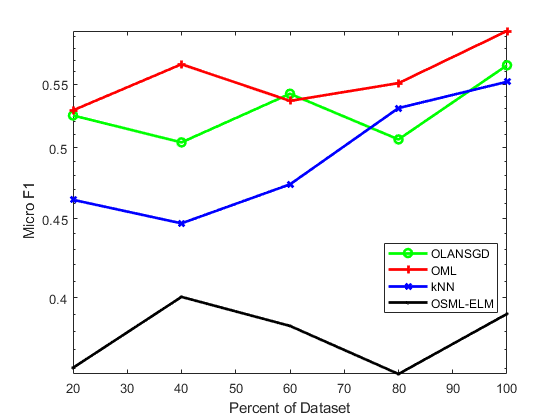}
	}
	\caption{Micro F1 of various methods on Corel5k, Enron, Medical and Emotions datasets.
		% We do not report the training time of $k$NN, as it does not need training.
	}
	\label{Experimentresults3}
\end{figure*}
\begin{figure*}[!t]
	\centering
	\subfigure[CoreI5k]{
		\includegraphics[width=0.23\textwidth]{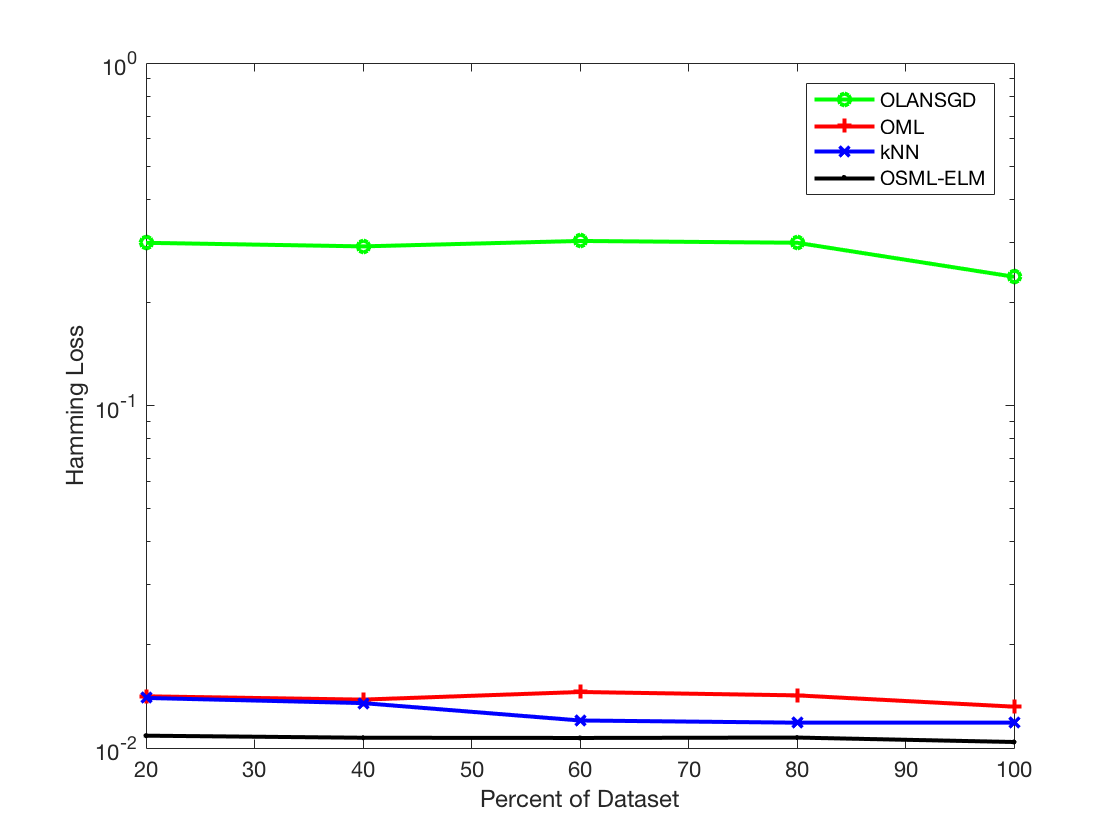}
	}
	\subfigure[Enron]{
		\includegraphics[width=0.23\textwidth]{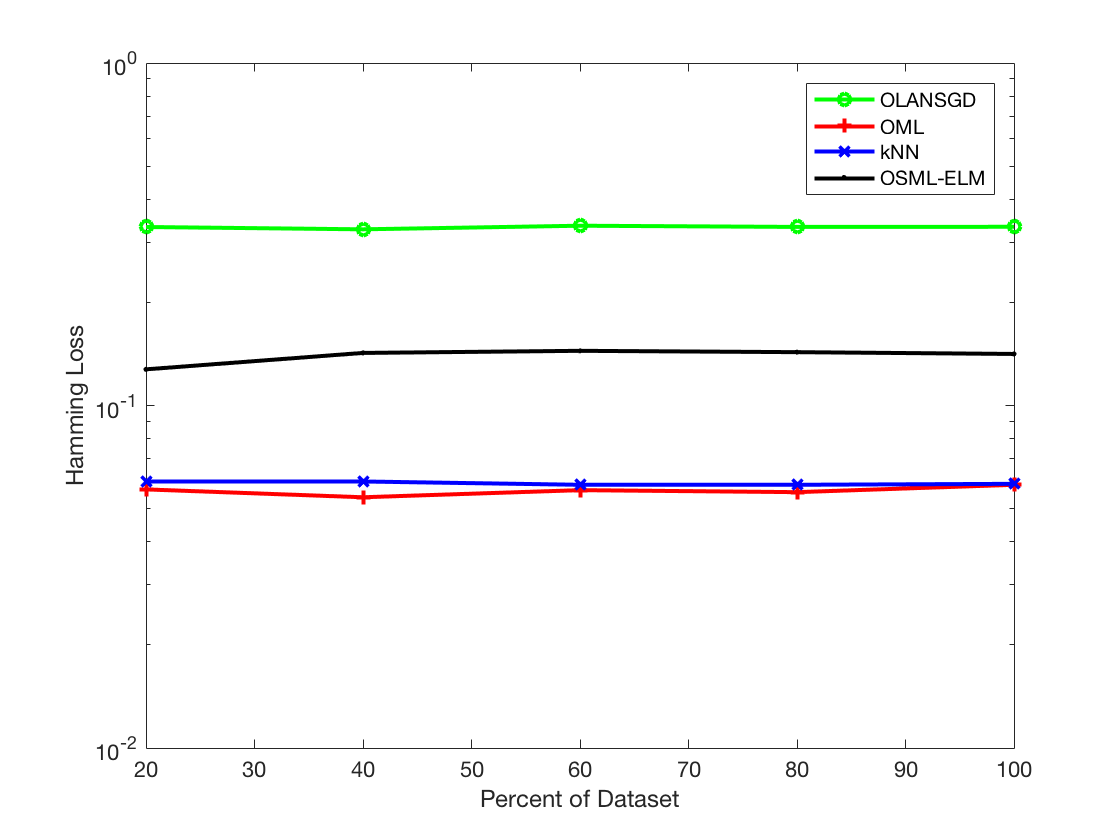}
	}
	\subfigure[FullMedical]{
		\includegraphics[width=0.23\textwidth]{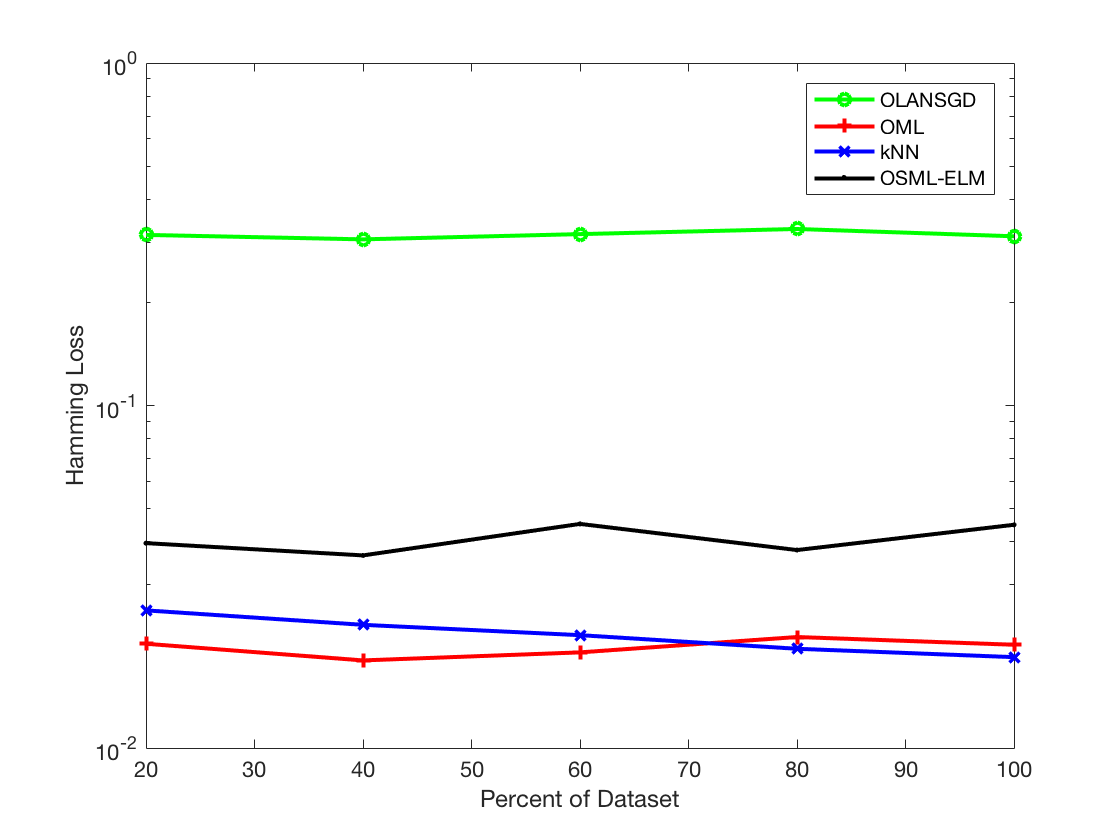}
	}
	\subfigure[Emotions]{
		\includegraphics[width=0.23\textwidth]{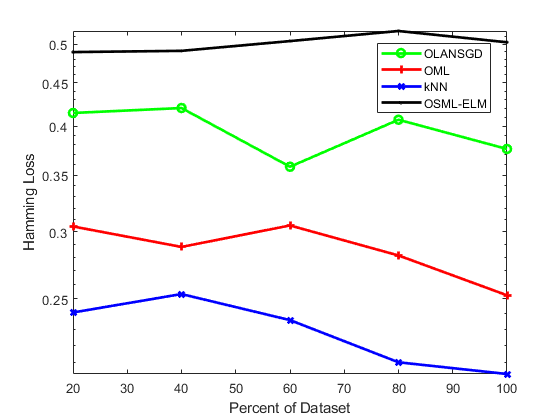}
	}
	\caption{Hamming Loss of various methods on Corel5k, Enron, Medical and Emotions datasets.
		% We do not report the training time of $k$NN, as it does not need training.
	}
	\label{Experimentresults4}
\end{figure*}

\begin{figure*}[!t]
	\centering
	\subfigure[Cal500]{
		\includegraphics[width=0.23\textwidth]{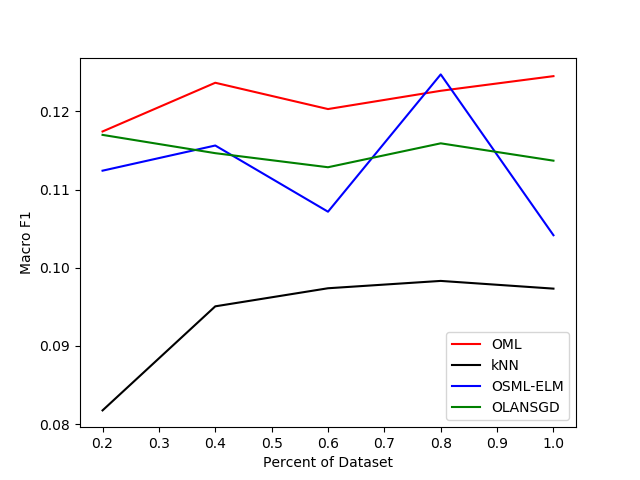}
	}
	\subfigure[Image]{
		\includegraphics[width=0.23\textwidth]{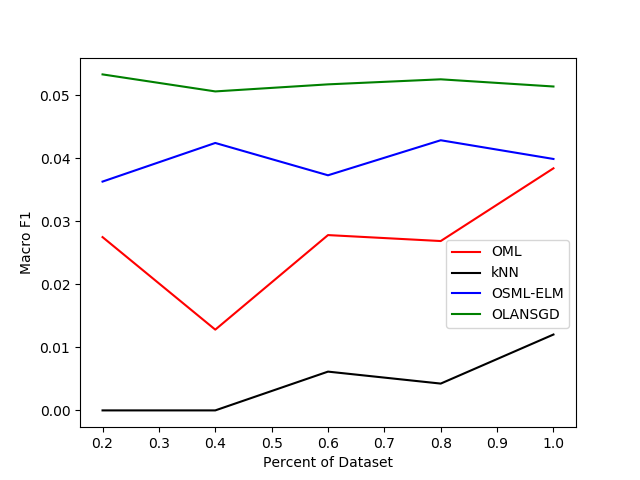}
	}
	\subfigure[scene]{
		\includegraphics[width=0.23\textwidth]{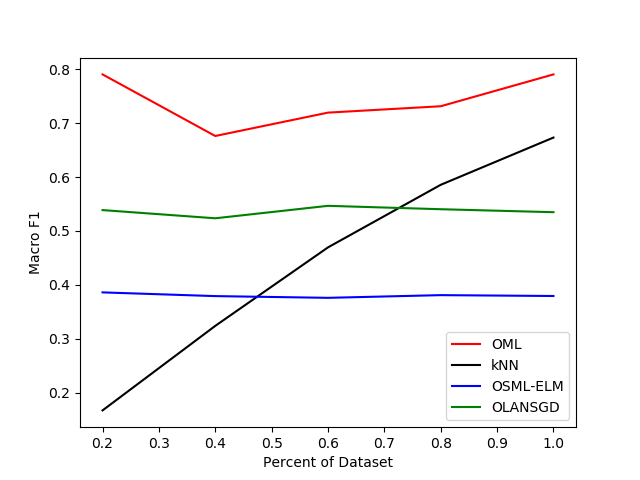}
	}
	\subfigure[slashdot]{
		\includegraphics[width=0.23\textwidth]{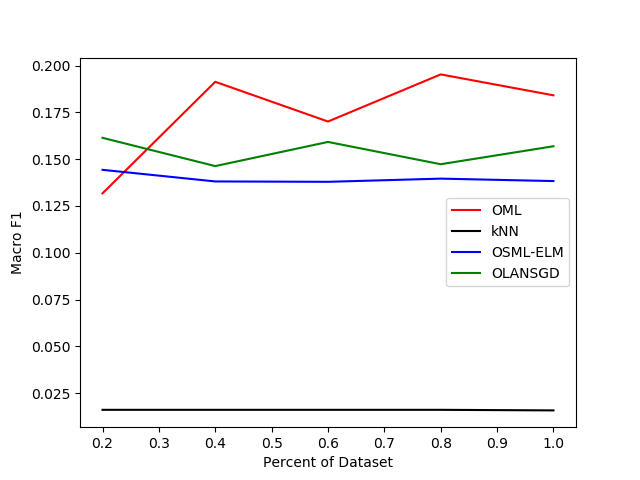}
	}
	\caption{Macro F1 of various methods on Cal500, Image, scene and slashdot datasets.
		% We do not report the training time of $k$NN, as it does not need training.
	}
	\label{Experimentresults5}
\end{figure*}

\begin{figure*}[!t]
	\centering
	\subfigure[Cal500]{
		\includegraphics[width=0.23\textwidth]{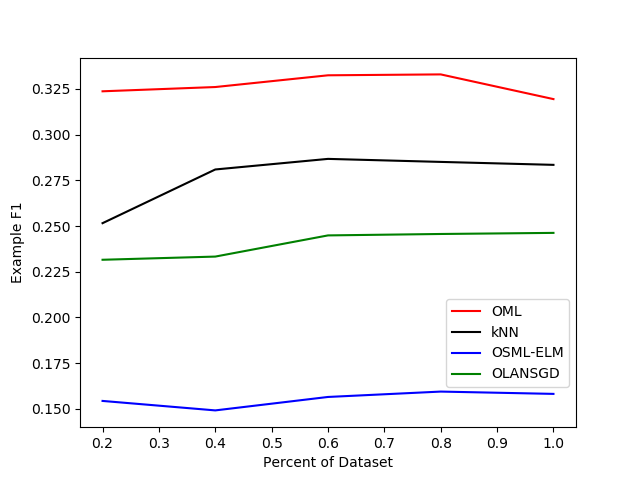}
	}
	\subfigure[Image]{
		\includegraphics[width=0.23\textwidth]{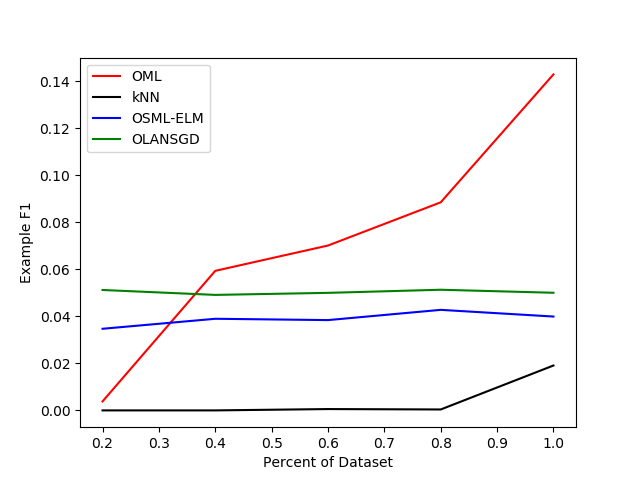}
	}
	\subfigure[scene]{
		\includegraphics[width=0.23\textwidth]{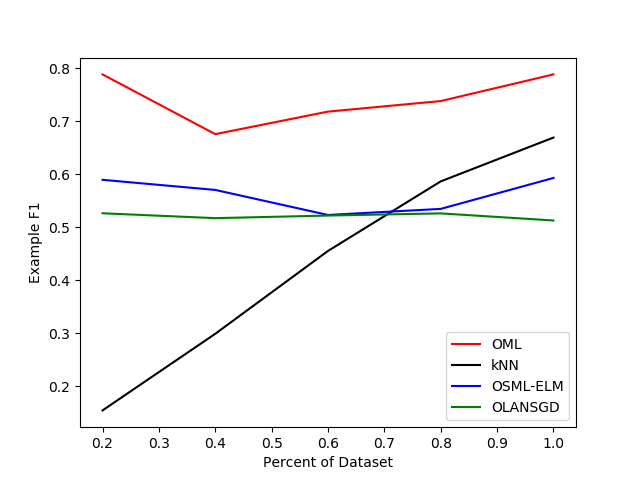}
	}
	\subfigure[slashdot]{
		\includegraphics[width=0.23\textwidth]{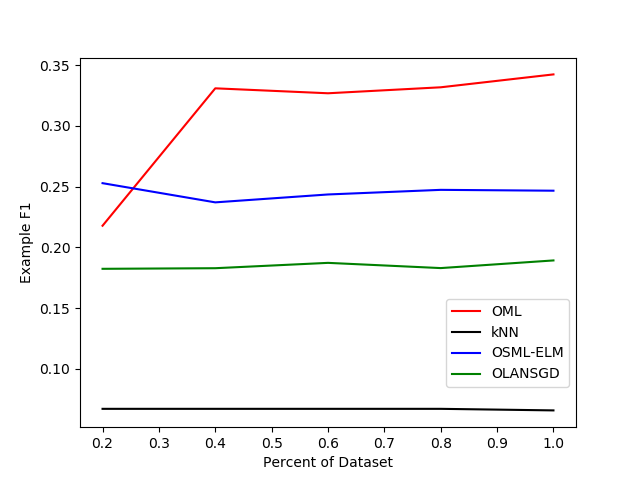}
	}
	\caption{Example F1 of various methods on Cal500, Image, scene and slashdot datasets.
		% We do not report the training time of $k$NN, as it does not need training.
	}
	\label{Experimentresults6}
\end{figure*}

\begin{figure*}[!t]
	\centering
	\subfigure[Cal500]{
		\includegraphics[width=0.23\textwidth]{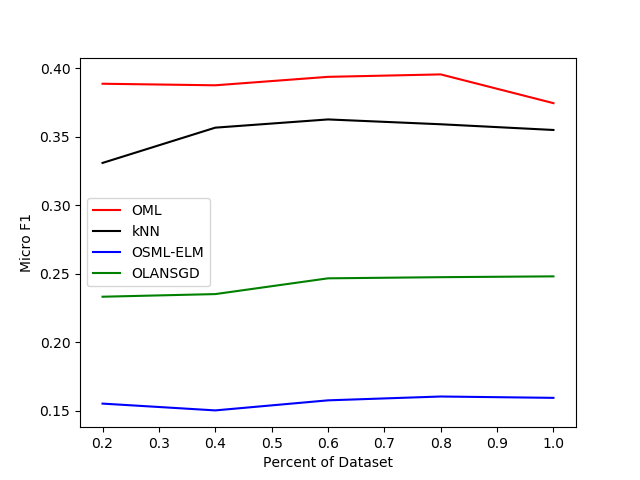}
	}
	\subfigure[Image]{
		\includegraphics[width=0.23\textwidth]{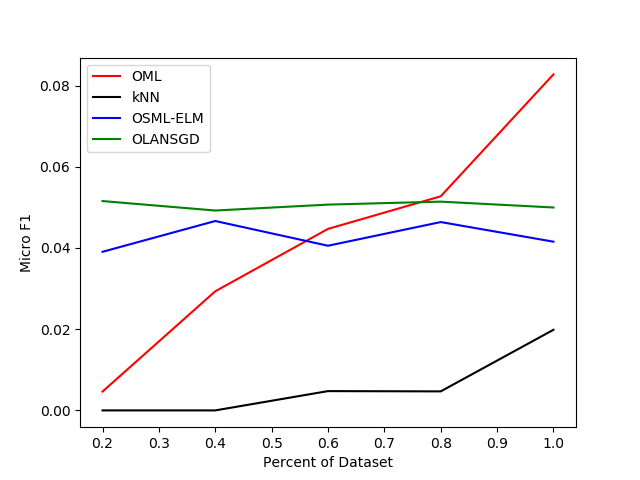}
	}
	\subfigure[scene]{
		\includegraphics[width=0.23\textwidth]{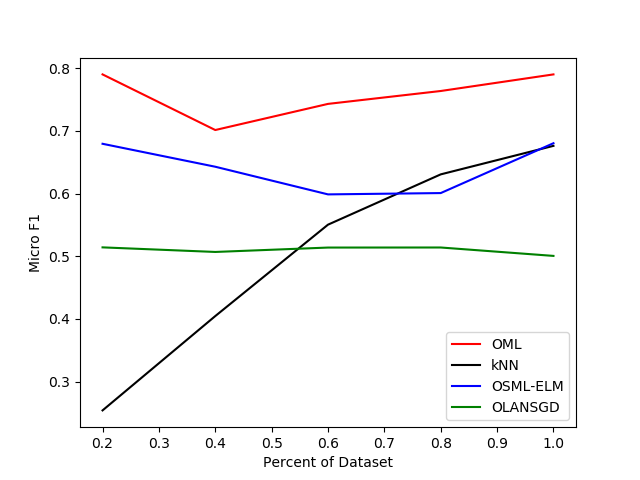}
	}
	\subfigure[slashdot]{
		\includegraphics[width=0.23\textwidth]{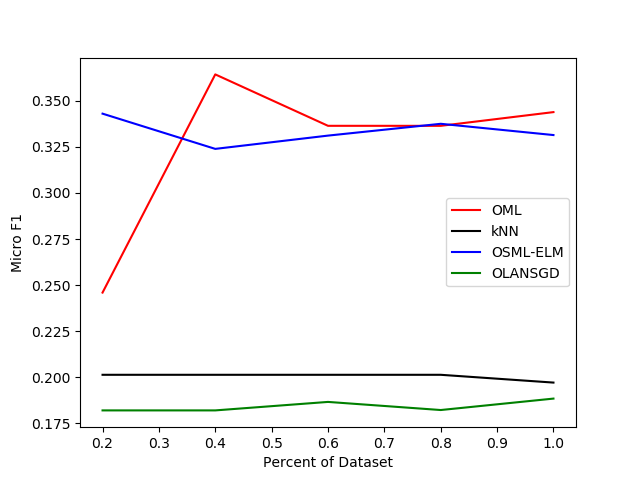}
	}
	\caption{Micro F1 of various methods on Cal500, Image, scene and slashdot datasets.
		% We do not report the training time of $k$NN, as it does not need training.
	}
	\label{Experimentresults7}
\end{figure*}
\begin{figure*}[!t]
	\centering
	\subfigure[Cal500]{
		\includegraphics[width=0.23\textwidth]{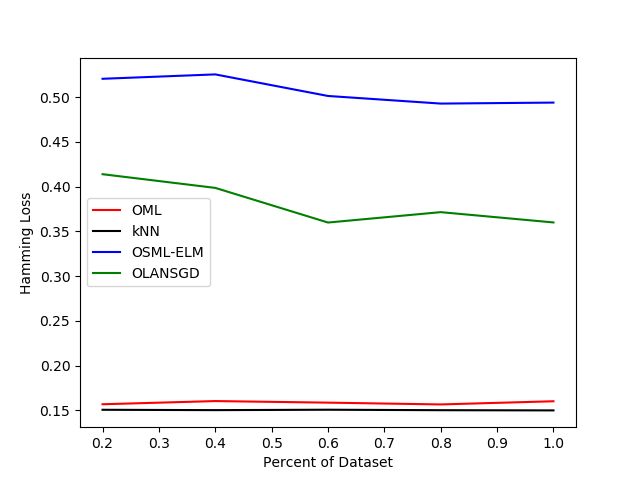}
	}
	\subfigure[Image]{
		\includegraphics[width=0.23\textwidth]{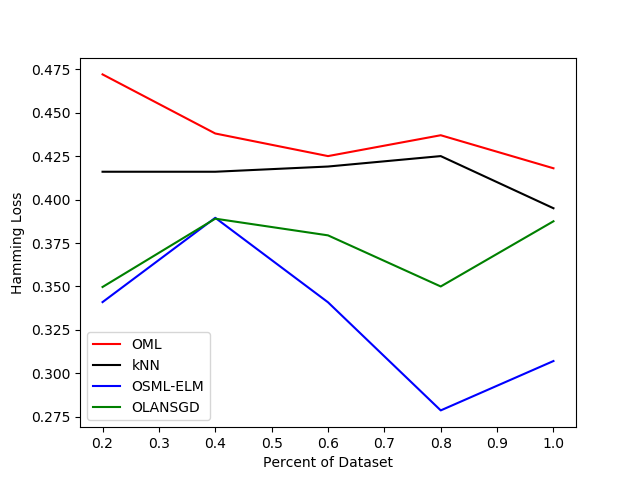}
	}
	\subfigure[scene]{
		\includegraphics[width=0.23\textwidth]{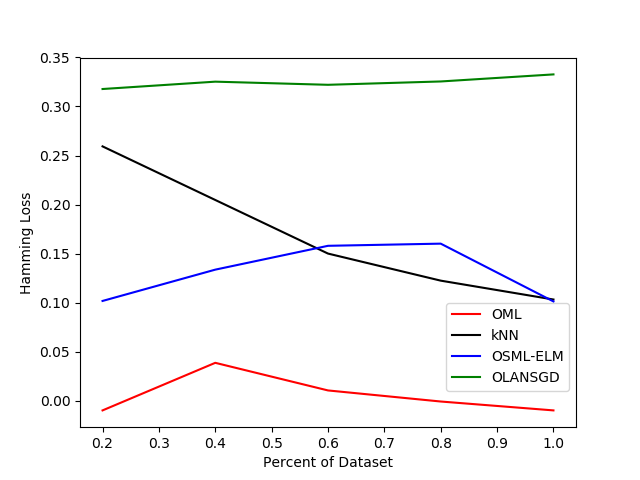}
	}
	\subfigure[slashdot]{
		\includegraphics[width=0.23\textwidth]{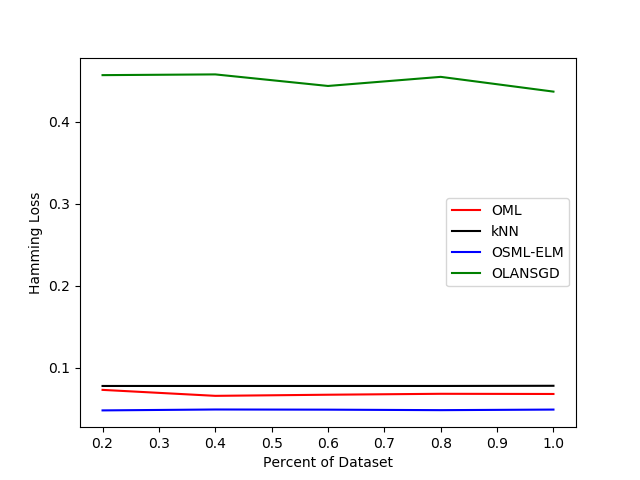}
	}
	\caption{Hamming Loss of various methods on Cal500, Image, scene and slashdot datasets.
		% We do not report the training time of $k$NN, as it does not need training.
	}
	\label{Experimentresults8}
\end{figure*}

\subsection{Experiment Setup}
\textbf{Baseline Methods}\quad We compare our OML method with several state-of-the-art online multi-label prediction methods:

\begin{itemize}
	\item OSML-ELM \cite{DBLP:journals/evs/VenkatesanEDPW17}: OSML-ELM uses a sigmoid activation function and outputs weights to predict the labels. In each step, output weight is learned from specific equation. OSML-ELM converts the label set from bipolar to unipolar representation in order to solve multi-label classification problems.
	\item OLANSGD  \cite{DBLP:conf/icassp/ParkC13}: Based on Nesterov's smooth method, OLANSGD proposes to use accelerated nonsmooth stochastic gradient descent to solve the online multi-label classification problem. It updates the model parameters using only the gradient information calculated from a single label at each iteration. It then implements a ranking function that ranks relevant and irrelevant labels.
	%	Proposed to solve online multi-label classification problem using the ANSGD based on Nesterov's smooth method. It updates model parameters only using the gradient information calculated from a single label at each iteration. Then, it gets a ranking function that orders all relevant labels are ranked higher than irrelevant ones.
	\item kNN: We adapt the k nearest neighbor(kNN) algorithm to solve online multi-label classification problems. A Euclidean metric is used to measure the distances between instances.
\end{itemize}
In our experiment, the matrix $V_1$ is initialized as a normal distributed random matrix. Initially, we keep 20\% of data for nearest neighbor searching. In our experiment, $M$ is set to 100000 and $m$ is set to 0.00001, while $k$ is set to 10. The codes are provided by the respective authors. Parameter $\lambda$ in OLANSGD is chosen from among $\{10^{-6}, 10^{-5},\cdots, 10^0\}$ using five-fold cross validation. We use the default parameter for OSML-ELM.

\textbf{Performance Measurements}\quad To fairly measure the performance of our method and baseline methods, we consider the following evaluation measurements \cite{journals/tip/MaoTG13,LefeiZhangDu}:
\begin{itemize}
	\item Micro-F1: computes true positives, true negatives, false positives and false negatives over all labels, then calculates an overall F-1 score.
	\item Macro-F1: calculates the F-1 score for each label, then takes the average of the F-1 score.
	\item Example-F1: computes the F-1 score for all labels of each testing sample, then takes the average of the F-1 score.
	\item Hamming Loss: computes the average zero-one score for all labels and instances.
\end{itemize}
The smaller the Hamming Loss value, the better the performance; moreover, the larger the value of the other three measurements, the better the performance.

\subsection{Prediction Performance}
Figures \ref{Experimentresults1} to \ref{Experimentresults8} present the four measurement results for our method and baseline approaches in respect of various datasets. From these figures, we can see that:
%Figures \ref{Experimentresults1}, \ref{Experimentresults2}, \ref{Experimentresults3} and \ref{Experimentresults4} show the 4 measurement results for our method and baseline approaches in respect of various datasets. From these figures, we can see that:
\begin{itemize}
	\item OML outperforms OSML-ELM and OLANSGD on most datasets, this is because neither of the latter approaches consider the label dependency.
	\item OML achieves better performance than $k$NN on all datasets except on Cal500 under Hmming Loss but they are comparable. This result illustrates that our proposed method is able to learn an appropriate metric for online multi-label classification.
	\item Moreover, $k$NN is comparable to OSML-ELM and OLANSGD on most datasets, which demonstrates the competitive performance of $k$NN.
\end{itemize}
Our experiments verify our theoretical studies and the motivation of this work: in short, our method is able to capture the interdependencies among labels, while also overcoming the bottleneck of $k$NN.
%ing label dependency that we have overcome the bottleneck of applying $k$NN into online multi-label classification, and the efficiency on label dependency capturing for online multi-label data. Therefore, we have achieved the motivation of our paper. Furthermore, it clearly claims the superior of our method.

\section{Conclusion}
\label{Conclusion}
Current multi-label classification methods assume that all data are available in advance for leaning. Unfortunately, this assumption hinders off-line multi-label methods from handling sequential data. OLANSGD and OSML-ELM have overcome this limitation and achieved promising results in online multi-label classification; however, these methods lack a theoretical analysis for their loss functions, and also do not consider the label dependency, which has been proven to lead to degraded performance. Accordingly, to fill the current research gap on streaming data, we here propose a novel online metric learning method for multi-label classification based on the large margin principle. We first project instances and labels into the same embedding space for comparison, then learn the distance metric by enforcing the constraint that the distance between an embedded instance and its correct label must be smaller than the distance between the embedded instance and other labels. Thus, two nearby instances from different labels will be pushed further. Moreover, we develop an efficient online algorithm for our proposed model. Finally, we also provide the upper bound of cumulative loss for our proposed model, which guarantees the performance of our method on unseen data. Extensive experiments corroborate our theoretical results and demonstrate the superiority of our method.

\footnotesize
\bibliographystyle{IEEEtranN}
\bibliography{IEEEabrv,reference}

%\begin{IEEEbiography}[{\includegraphics[width=1in,height=1.25in,clip,keepaspectratio]{Xiaobo.jpg}}]{Xiaobo Shen}
%received his BSc and PhD from School of Computer Science and Engineering, Nanjing University of Science and Technology in 2011 and 2017 respectively.
%He is now a research fellow in Rolls-Royce@NTU Corp Lab, Nanyang Technological University.
%From 2015 to 2016, he visited University of Queensland and University of Technology Sydney for two years.
%His primary research interests are Multi-view Learning, Multi-label Learning, Hashing.
%\end{IEEEbiography}

% that's all folks
\end{document}